
\documentclass[11pt,a4paper,final]{article}
\usepackage[left=1in,right=1in,top=1in,bottom=1in]{geometry}

\usepackage[numbers]{natbib}





\usepackage[utf8]{inputenc} 
\usepackage[T1]{fontenc}    
\usepackage{hyperref}       
\usepackage{url}            
\usepackage{booktabs}       
\usepackage{amsfonts}       
\usepackage{nicefrac}       
\usepackage{microtype}      
\usepackage{xcolor}         


\usepackage[ruled,vlined]{algorithm2e}
\usepackage[utf8]{inputenc}

\usepackage{amsmath,amssymb,amsthm}

\newtheorem{Thm}{Theorem}[section]
\newtheorem{Lem}{Lemma}[section]
\newtheorem{Cor}{Corollary}[section]
\newtheorem{Clm}{Claim}[section]

\DeclareMathOperator*{\argmin}{arg\,min}

\newcommand{\BigO}{\mathcal O}

\newcommand{\Breg}{B}
\newcommand{\diag}{\operatorname{diag}}
\newcommand{\OMD}{\texttt{OMD}}
\newcommand{\OPT}{\operatorname{OPT}}


\newcommand{\C}{\mathcal C}

\newcommand{\E}{\mathbb E}

\newcommand{\K}{\mathcal K}

\newcommand{\R}{\mathbb R}

\usepackage{scalerel,mathtools,color}
\let\svsqrt\sqrt
\newsavebox\Nsqrt
\def\sr#1{\ThisStyle{%
		\savebox\Nsqrt{\scalebox{.5}[1]{$\SavedStyle\svsqrt{\phantom{\cramped{#1#1}}}$}}%
		\ooalign{\usebox{\Nsqrt}\cr\kern.2pt\usebox{\Nsqrt}\cr\hfil$\SavedStyle\cramped{#1}$}}}

\usepackage{enumitem}
\usepackage{bm}
\def\*#1{\mathbf{#1}}

\usepackage{threeparttable}


\title{Meta-Learning Adversarial Bandits
}

%

\author{%
  Maria-Florina Balcan\qquad Keegan Harris\qquad Mikhail Khodak\qquad Zhiwei Steven Wu\\\\
  Carnegie Mellon University School of Computer Science\\\\
  \texttt{\{ninamf,keeganh\}@cs.cmu.edu,\{khodak,zstevenwu\}@cmu.edu}\\
}

\begin{document}

\date{}
\maketitle

\begin{abstract}
  We study online learning with bandit feedback across multiple tasks, with the goal of improving average performance across tasks if they are similar according to some natural task-similarity measure.
  As the first to target the adversarial setting, we design a unified meta-algorithm that yields setting-specific guarantees for two important cases:
  multi-armed bandits (MAB) and bandit linear optimization (BLO).
  For MAB, the meta-algorithm tunes the initialization, step-size, and entropy parameter of the Tsallis-entropy generalization of the well-known Exp3 method, with the task-averaged regret provably improving if the entropy of the distribution over estimated optima-in-hindsight is small.
  For BLO, we learn the initialization, step-size, and boundary-offset of online mirror descent (OMD) with self-concordant barrier regularizers, showing that task-averaged regret varies directly with a measure induced by these functions on the interior of the action space.
  Our adaptive guarantees rely on proving that unregularized follow-the-leader combined with multiplicative weights is enough to online learn a non-smooth and non-convex sequence of affine functions of Bregman divergences that upper-bound the regret of OMD.\looseness-1
\end{abstract}

%
%


\section{Introduction}\label{sec:intro}
Many real-world problems involve decision-making under partial feedback. For example, an administrator of a news website will not get to observe user engagement for undisplayed content. Likewise, the administrator does not know what their commute time would have been had they taken a different route to work that day. Such  feedback, where a \emph{learner} only observes the outcome of the action taken, is often referred to as \emph{bandit feedback}. This is in contrast to the \emph{full feedback} setting, in which the learner gets to observe what would have happened under all possible actions they could have taken. 

While there are many methods with performance guarantees for bandit learning, most do not take into consideration the information the learner has gained from previous experience completing similar tasks. When selecting content for a website targeting a new demographic, the administrator will likely consider which types of content generated high levels of engagement with similar subpopulations of users. Likewise, they will most likely use knowledge about traffic patterns gained from their daily commute to work to inform routes to other locations. 
Can our learning algorithms do the same?

\emph{Meta-learning}, also known as \emph{learning-to-learn} \cite{thrun1998ltl}, is a popular approach to studying such problems in the context of multi-task learning, changing environments, and beyond. 
The goal of such meta-learning algorithms is to leverage information from previously-seen tasks in order to achieve better performance on the current task at hand. While most meta-learning algorithms are designed for the full feedback setting, there is a small but growing amount of recent work which aims to design meta-learning algorithms capable of operating under bandit feedback. 
The two most commonly studied types of feedback in the bandit literature are \emph{stochastic bandit feedback}, where feedback is sampled i.i.d. from some distribution, and a more general notion called \emph{adversarial bandit feedback}, where it is chosen by an adversary possibly trying to harm the learner. To the best of our knowledge, we are the first to study the problem of meta-learning under adversarial bandit feedback.

We consider a setting in which a meta-learner interacts with a sequence of bandit tasks.
In the single-task setting, the goal is to minimize \emph{regret} with respect to the best fixed action in hindsight.
Lifting this to the multi-task setting, our goal will be to design algorithms which achieve low regret \emph{on average} across tasks. Ideally, an algorithm's task-averaged regret should be no worse than that of algorithms for the single-task setting, e.g. if the tasks are not very similar, and should be much better on tasks that are closely related, e.g. if the same small set of arms do well on all of them.

We design a meta-algorithm based on learning the initialization and tuning parameters of online mirror descent (OMD) when it uses regularizers employed by bandit algorithms such as Exp3 \citep{auer2002exp3}.
Theoretically, our work differs from past approaches to parameterizing OMD in the full information setting because the regularizers used by bandit methods are non-Lipschitz near the boundary of the action set;
thus the results of past work~\citep{khodak2019provable,khodak2019adaptive,denevi2019meta} do not apply.
To overcome this issue, we only initialize away from the boundary and adapt our algorithms to handle the resulting error.

We apply our meta-algorithm for adversarial bandit feedback to both multi-armed bandits (MAB) and bandit linear optimization (BLO), obtaining in both settings new meta-learning algorithms with provable guarantees. For MAB, the average $m$-round regret across $T$ tasks of our algorithm is
\begin{equation}\label{eq:result}
	o_T(1)+2\min_{\beta\in(0,1]}\sqrt{\hat H_\beta d^\beta m/\beta}
\end{equation}
where $d$ is the number of actions and $\hat H_\beta$ is the Tsallis entropy~\citep{tsallis1988possible,abernethy2015fighting} of the empirical distribution over the estimated optimal actions across tasks.
At $\beta=1$ the Tsallis entropy reduces to the Shannon entropy;
both are small if most tasks are estimated to be solved by the same few arms and large if all are used roughly the same amount, making it a natural task-similarity notion.
The bound of $\log d\ge\hat H_1$ means that the bound~\eqref{eq:result} recovers Exp3's guarantee in the worst-case of dissimilar tasks.
In the important case of $s\ll d$ arms always being estimated to be optimal we have $\hat H_\beta=\BigO(s)$, so using $\beta=\frac1{\log d}$ in bound~\eqref{eq:result} yields a task-averaged regret of $\BigO(\sqrt{sm\log d})$ as $T\to\infty$.
For $s=\BigO_d(1)$ this beats the single-task lower bound of $\Omega(\sqrt{dm})$~\citep{audibert2011minimax}.
We also obtain natural task-averaged regret bounds for BLO, albeit with different setting-specific notions of task similarity. 

Our main technical contributions are as follows:
\begin{enumerate}[noitemsep]
    \item We design a unified meta-learning algorithm to set the initialization and tuning parameters of OMD when using regularizers used by different bandit algorithms (Algorithm~\ref{alg:meta}).
    Apart from strong guarantees and generality, our approach is notable for its adaptivity:
    we do not need to know anything about the task-similarity---e.g. the size of the subset of optimal arms---to adapt to similar tasks.\looseness-1
    \item We apply our meta-approach to obtain a meta-learning algorithm for the adversarial MAB problem. In particular, we use the method of \citet{abernethy2015fighting}---OMD with the Tsallis regularizer---as our within-task algorithm to achieve bounds on task-averaged regret that depend on a natural notion of task similarity:
    the Tsallis entropy of the estimated optima-in-hindsight.
    \item We adapt Algorithm \ref{alg:meta} to the adversarial BLO problem by setting the regularizer to be a self-concordant barrier function, as in~\citet{abernethy2008competing}. As in MAB, we obtain task-averaged regret bounds which depend on a natural notion of task similarity based on the constraints defining the convex action space.
    We instantiate the BLO result in two settings:
    linear bandits over the sphere and an application to the bandit shortest-path problem~\citep{takimoto2003path,kalai2005efficient}.
\end{enumerate}

\section{Related work}

While we are the first to consider meta-learning under adversarial bandit feedback, many have studied meta-learning in various {\em stochastic} bandit settings~\citep{sharaf2021meta, simchowitz2021bayesian, kveton2020meta, cella2020meta, kveton2021meta, basu2021no, azizi2022non, lazaric2013sequential}. \citet{kveton2021meta}, \citet{basu2021no}, and \citet{simchowitz2021bayesian} study meta-learning algorithms for the Bayesian bandit setting. \citet{kveton2020meta} and \citet{sharaf2021meta} consider meta-learning for contextual bandits, although they allow their algorithms to have offline access to a set of training tasks for which full feedback is available. \citet{cella2020meta} and \citet{moradipari2022multi} provide algorithms based on OFUL~\citep{abbasi2011improved} for meta-learning in stochastic linear bandits under various assumptions on how the bandit learning tasks are generated. \citet{azizi2022non} study a setting in which a meta-learner faces a sequence of stochastic multi-armed bandit tasks. While the sequence of tasks may be adversarially designed, the adversary is constrained to choose the optimal arm for each task from a smaller but unknown subset of arms. In contrast to \cite{cella2020meta, moradipari2022multi, azizi2022non}, we make no assumptions about how the sequence of tasks is generated and our guarantees adapt to a natural measure of similarity between tasks.

Theoretically our analysis draws on the average regret-upper-bound analysis (ARUBA) framework of \citet{khodak2019adaptive}, which was designed for meta-learning under full information.
While the general approach is not restricted by convexity~\citep{balcan2021ltl} and has  been combined with bandit algorithms on the meta-level~\citep{khodak2021fedex}, the existing results cannot be applied to OMD methods for within-task learning under bandit feedback because the associated regularizers are non-Lipschitz or sometimes even unbounded near the boundaries of the action space.
We thus require a specialized analysis for the bandit setting.
\citet{denevi2019meta} also study an OMD-based algorithm for meta-learning in the online setting, but their results are also only applicable in the full information setting.\looseness-1

\section{Learning the regularizers of bandit algorithms}\label{sec:setup}

We consider the problem of meta-learning across bandit tasks $t=1,\dots,T$ over some fixed set $\K\subset\R^d$. 
On each round $i=1,\dots,m$ of task $t$ we play action $\*x_{t,i}\in\K$ and receive feedback $\ell_{t,i}(\*x_{t,i})$ for some function $\ell_{t,i}:\K\mapsto[-1,1]$.
Note that all functions we consider will be linear and so we will also write $\ell_{t,i}(\*x)=\langle\ell_{t,i},\*x\rangle$. Additionally, we allow each $\ell_{t,i}$ to be chosen by an \emph{oblivious adversary}, i.e. an adversary with knowledge of the algorithm that must select $\ell_{t,i}$ independent of $\*x_{t,i}$.
We will also denote $\*x(a)$ to be the $a$th element of the vector $\*x\in\R^d$, $\overline\K$ to be the convex hull of $\K$, and $\triangle_n$ to be the simplex on $n$ elements.
Finally, note that all proofs can be found in the Appendix.\looseness-1

In online learning, the goal on a single task $t$ is to play actions $\*x_{t,1},\dots\*x_{t,m}$ that minimize the regret $\sum_{i=1}^m\ell_{t,i}(\*x_{t,i})-\ell_{t,i}(\*x_t^\ast)$, where $\*x_t^\ast\in\argmin_{\*x\in\K}\sum_{i=1}^m\ell_{t,i}(\*x)$.
Lifting this to the meta-learning setting, our goal as in past work \citep{khodak2019adaptive,balcan2021ltl} will be to minimize the {\bf task-averaged regret}
\begin{equation}\label{eq:tar}
	\frac1T\sum_{t=1}^T\sum_{i=1}^m\ell_{t,i}(\*x_{i,t})-\ell_{t,i}(\*x_t^\ast)
\end{equation}
In-particular, we hope to use multi-task data in order to improve average performance as the number of tasks $T\to\infty$, e.g. by attaining a task-averaged regret of $o_T(1)+\tilde\BigO(V\sqrt m)$, where $V\in\R_{\ge0}$ is a measure of task-similarity that is small if the tasks are similar but still yields the worst-case single-task performance if they are not.

\subsection{Online mirror descent as a base-learner}\label{ssec:mirror}

In meta-learning we are commonly interested in learning a within-task algorithm or {\bf base-learner}, a parameterized method that we run on each task $t$.
A popular approach, both empirically~\citep{finn2017maml,nichol2018reptile} and theoretically~\citep{khodak2019adaptive,denevi2019ltlsgd}, is to learn the initialization and sometimes other parameters of a gradient-based method such as stochastic gradient descent.
The hope is that optimal parameters for each task are close to each other and thus a meta-learned initialization will result in a strong model after only a few steps.
In this paper we take a similar approach applied to online mirror descent, a generalization of gradient descent to non-Euclidean geometries~\citep{beck2003mirror}.
Given a strictly convex {\bf regularizer} $\phi:\overline\K\mapsto\R$ and step-size $\eta>0$, this method performs the update\looseness-1
\begin{equation}\label{eq:mirror}
	\*x_{t,i+1}=\argmin_{\*x\in\overline\K}\Breg_\phi(\*x||\*x_{t,1})+\eta\sum_{j<i}\langle\nabla\ell_{t,j}(\*x_{t,j}),\*x\rangle
\end{equation}
where $\Breg_\phi(\*x||\*y)=\phi(\*x)-\phi(\*y)-\langle\nabla\phi(\*y),\*x-\*y\rangle$ is the {\bf Bregman divergence} of $\phi$.
OMD recovers online gradient descent when $\phi(\*x)=\frac12\|\*x\|_2^2$, in which case $\Breg_\phi(\*x||\*y)=\frac12\|\*x-\*y\|_2^2$;
another important example is {\bf exponentiated gradient}, for which $\phi(\*p)=\langle\*p,\log\*p\rangle$ is the negative Shannon entropy on probability vectors $\*p\in\triangle_n$ and $\Breg_\phi$ is the KL-divergence~\citep{shalev-shwartz2011oco}.
An important property of $\Breg_\phi$ is that the sum over functions $\Breg_\phi(\*x_t||\cdot)$ is minimized at the mean $\bar{\*x}$ of the points $\*x_1,\dots,\*x_T$.

While originally developed for online convex optimization, mirror descent using {\bf loss estimators} $\hat\ell_{t,i}$ constructed using bandit feedback $\ell_{t,i}(\*x_{t,i})$ forms an important class of methods for bandit settings~\citep{abernethy2008competing,neu2015explore,abernethy2015fighting}, including the famous Exp3 method \citep{auer2002exp3}.
Learning the initialization for online mirror descent has been considered for full-information meta-learning \citep{khodak2019adaptive,denevi2019meta}, but these papers do not apply to the types of regularizers $\phi$ required for bandits, which are often non-Lipschitz and sometimes even unbounded on the boundary of $\overline\K$ \citep{abernethy2008competing,abernethy2015fighting}.
For example, \citet{khodak2019adaptive} also take advantage of the mean-as-minimizer property of $\Breg_\phi$ and learn both the initialization and step-size $\eta$, but they assume the gradient of $\Breg_\phi$ is bounded on the domain, which does not hold if $\phi$ is non-Lipschitz, e.g. if $\phi$ is the negative Shannon entropy as in Exp3.

In this paper we resolve these issues by meta-learning to initialize and tune mirror descent when it employs a regularizer used by bandit methods.
Following the average regret-upper-bound analysis (ARUBA) framework of \citet{khodak2019adaptive}, we do this by online learning a sequence of losses $U_t(\*x,\theta)$, each of which is a hyperparameter $\theta$-dependent affine function of a Bregman divergence from an initialization $\*x\in\overline\K$ to some known fixed point in $\overline\K$.
We are interested in learning such functions because the regret after $m$ rounds of OMD initialized at $\*x$ with step-size $\eta$ is usually upper-bounded by $\frac1\eta\Breg_\phi(\*x_t^\ast||\*x)+\BigO(\eta m)$ for $\*x_t^\ast$, the optimum-in-hindsight on task~$t$~\citep{shalev-shwartz2011oco,hazan2015oco}.\looseness-1

Unlike past work, we use a parameter $\varepsilon>0$ to constrain this optimum to lie in a convex subset $\K_\varepsilon\subset\overline\K$ whose boundary is $\varepsilon$-away from that of $\overline\K$ and which satisfies $\K_\varepsilon\subset\K_{\varepsilon'}$ whenever $\varepsilon\le\varepsilon'$;
for example, we use $\K_\varepsilon=\{\*x\in\triangle_d:\min_a\*x(a)\ge\varepsilon/d\}$ for the simplex.
Thus, unlike with full-information, the feedback we receive from the within-task algorithm will be the minimizer $\OPT_\varepsilon(\hat\ell_t)=\argmin_{\*x\in\K_\varepsilon}\langle\hat\ell_t,\*x\rangle$ of the estimated loss $\hat\ell_t=\sum_{i=1}^m\hat\ell_{t,i}$ over the $\varepsilon$-constrained subset, where we can pick $\varepsilon\in(0,1)$.
This allows us to handle regularizers that diverge near the boundary, but also introduces $\varepsilon$-dependent error terms to handle.
In the BLO case it also forces us to automatically tune $\varepsilon$ itself, as initializing too close to the boundary leads to unbounded regret while initializing too far away does not take advantage of the task-similarity.

Thus in full generality the upper bounds of interest are functions of the initialization $\*x$ and three parameters:
the step-size $\eta>0$, a parameter $\beta$ of the regularizer $\phi_\beta$, and the boundary offset $\varepsilon>0$.
\begin{equation}\label{eq:rub}
	U_t(\*x,(\eta,\beta,\varepsilon))
	=\frac{\Breg_{\phi_\beta}(\OPT_\varepsilon(\hat\ell_t)||\*x)}\eta+(\eta G_\beta^2+C\varepsilon)m
\end{equation}
Here $G_\beta\ge1,C\ge0$ are constants and $\beta$ parameterizes the regularizer $\phi_\beta$, e.g. the negative Tsallis entropy used to attain optimal dependence on dimension for MAB~\citep{abernethy2015fighting}.
The reason to optimize this sequence of upper bounds is because the resulting average regret directly bounds the task-averaged regret, apart from some $o_T(1)$ terms.
Furthermore, an affine sum over Bregman divergences is minimized at the average optimum in hindsight, which leads to natural and problem specific task-similarity measures $V$ \citep{khodak2019adaptive};
specifically, $V$ is the square root of the average divergence between optima in hindsight and their mean, which is small if the tasks are optimized by similar parameters.\looseness-1

\subsection{A meta-algorithm for tuning bandit algorithms}\label{ssec:algo}

Having specified our meta-goal---learning to initialize and tune OMD for regularizers $\phi_\beta$ use in bandit tasks---we now detail our meta-algorithm for doing so, pseudo-code for which is in Algorithm~\ref{alg:meta}.
At a high level, the method simultaneously learns the initialization by taking the mean of $\K_\varepsilon$-constrained estimated optima-in-hindsight---i.e. follow-the-leader over the Bregman divergences in \eqref{eq:rub}---while simultaneously tuning OMD via multiplicative weights over a discrete grid $\Theta$ over $\theta=(\eta,\beta,\varepsilon)$.

In more detail, the algorithm assumes two primitives discussed above: 
(1) the base-learner $\OMD_{\eta,\beta}$ that outputs an estimated cumulative loss $\hat\ell_t\in\R^d$ after running online mirror descent over the $m$ losses $\ell_{t,1},\dots,\ell_{t,m}$ of task $t$, and (2) an optimizer $\OPT_\varepsilon$ that, given a vector $\*c\in\R^d$, finds the minimum of $\langle\*c,\cdot\rangle$ over $\K_\varepsilon$.
Algorithm~\ref{alg:meta} maintains a categorical distribution $\*p_t$ over a finite set $\Theta\subset\R^3$ containing triples $\theta=(\eta,\beta,\varepsilon)$, each with its own associated initialization $\*x_t^{(\theta)}$;
at each task $t$ it samples $\theta_t=(\eta_t,\beta_t,\varepsilon_t)$ from $\Theta$ using $\*p_t$ and runs $\OMD_{\eta_t,\beta_t}$ from initialization $\*x_t^{(\theta_t)}$, obtaining a loss estimate $\hat\ell_t$.
Then for each $\theta=(\eta,\beta,\varepsilon)$ in $\Theta$ the method updates the corresponding initialization $\*x_t^{(\theta)}$ by taking the average of the $\varepsilon$-constrained optima-in-hindsight $\OPT_\varepsilon(\hat\ell_1),\dots,\OPT_\varepsilon(\hat\ell_t)$ seen so far.
Finally, the algorithm updates the distribution $\*p_t$ using multiplicative weights over the following modification of the regret-upper-bound \eqref{eq:rub} above for some $\rho>0$:
\begin{align}\label{eq:modrub}
U_t^{(\rho)}(\*x,\theta)=\frac{\Breg_{\phi_\beta}(\hat{\*x}_t^{(\theta)}||\*x)+\rho^2D^2}\eta+(\eta G_\beta^2+C\varepsilon)m
\end{align}
Note that given $\rho>0$ this function is fully defined after running $\OMD_{\eta_t,\beta_t}$ on task $t$ to obtain loss estimates $\hat\ell_t$ and then computing the $\varepsilon$-constrained optimum-in-hindsight $\hat{\*x}_t^{(\theta)}=\OPT_\varepsilon(\hat\ell_t)$ for each $\theta=(\eta,\beta,\varepsilon)$.
This allows us to use full-information multiplicative weights for $\theta$.
$\rho>0$ is necessary for learning $\eta$, as if its optimum is near zero then $U_t$ will not be Lipschitz near the optimum.
Theorem~\ref{thm:meta} shows a sublinear regret guarantee for Algorithm~\ref{alg:meta} over the unmodified regret-upper-bounds \eqref{eq:modrub} w.r.t. all elements in $\overline\K$ and in a continous set of hyperparameters $\Theta^\ast\subset\R^3$.

\begin{algorithm}[!t]
	\DontPrintSemicolon
	\KwIn{compact $\overline\K\subset\R^d$, 
			meta-hyperparameters $\alpha,\rho>0$,
			finite $\Theta\subset\R^3$ over $(\eta,\beta,\varepsilon)$,
			base-learner $\OMD_{\eta,\beta}:\overline\K\mapsto\R^d$, 
			constrained linear minimizer $\OPT_\varepsilon:\R^d\mapsto\K_\varepsilon$}
	\For{$\theta=(\eta,\beta,\varepsilon)\in\Theta$}{
		$\*x_1^{(\theta)}\gets\argmin_{\*x\in\overline\K}\phi(\*x)$\tcp*{maintain an initialization for each $\theta\in\Theta$}
	}
	$\*p_1\gets\*1_{|\Theta|}/|\Theta|$\tcp*{multiplicative weights (MW) initialization}
	\For{task $t=1,\dots,T$}{
		sample $\theta_t=(\eta_t,\beta_t,\varepsilon_t)\sim\*p_t$ from $\Theta$\\
		$\hat\ell_t\gets\OMD_{\eta_t,\beta_t}(\*x^{(\theta_t)})$\tcp*{run bandit OMD within-task}
		\For{$\theta=(\eta,\beta,\varepsilon)\in\Theta$}{
			$\*x_{t+1}^{(\theta)}\gets\frac1t\sum_{s=1}^t\OPT_\varepsilon(\hat\ell_s)$\tcp*{update all initializations}
			$\*p_{t+1}(\theta)\gets\*p_{t+1}(\theta)\exp\left(-\alpha U_t^{(\rho)}(\*x_t^{(\theta)},\theta)\right)$\tcp*{MW update using loss in \eqref{eq:modrub}}
		}
		$\*p_{t+1}\gets\*p_{t+1}/\|\*p_{t+1}\|_1$
	}
	\caption{\label{alg:meta}
		Algorithm for tuning an online mirror descent (OMD) base-learner $\OMD_{\eta,\beta}$ with parameterized regularizer $\phi_\beta:\overline\K\mapsto\R$ and step-size $\eta>0$ that runs OMD on loss estimators $\hat\ell_{t,1},\dots,\hat\ell_{t,m}$ from an initialization $\*x\in\overline\K$ and returns estimated loss $\hat\ell_t=\sum_{i=1}^m\hat\ell_{t,i}\in\R^d$.
		Then for every $\varepsilon>0$ the constrained optimizer $\OPT_\varepsilon(\hat\ell)=\argmin_{\*x\in\K_\varepsilon}\langle\hat\ell,\*x\rangle$ returns the minimizer of the estimated loss over the constrained subset $\K_\varepsilon\subset\overline\K$ (set $\OPT_\varepsilon(\*0_d)=\argmin_{\*x\in\overline\K}\phi(\*x)$).
	}
\end{algorithm}

\begin{Thm}\label{thm:meta}
	Let $\Theta^\ast=(0,\infty)\times[\underline\beta,\overline\beta]\times[\underline\varepsilon,\overline\varepsilon]$ for $0\le\underline\beta\le\overline\beta\le1$ and $0\le\underline\varepsilon\le\overline\varepsilon\le1$ be the set of hyperparameters $(\eta,\beta,\varepsilon)$ of interest.
	Then there exists integer $k=\BigO(\lceil\sqrt{mT}\rceil)$ and $\alpha,\underline\eta,\overline\eta\in(0,\infty)$ such that running Algorithm~\ref{alg:meta} with $\Theta$ the product of uniform grids of size $k$ over each non-singleton dimension of  $[\underline\eta,\overline\eta]\times[\underline\beta,\overline\beta]\times[\underline\varepsilon,\overline\varepsilon]$ and $\alpha$ the meta-step-size yields regret
	\begin{align}\label{eq:metareg}
	\begin{split}
	\E&\sum_{t=1}^TU_t(\*x_t^{(\theta_t)},\theta_t)-\min_{\*x\in\overline\K,\theta\in\Theta^\ast}\sum_{t=1}^TU_t(\*x,\theta)
	\\
	&\le\left(C\sqrt m+2DG\left(\frac1\rho+M\right)\right)\sqrt{6mT\log k}+\frac{8SK^2G\sqrt m}{\rho D}(1+\log T)+\rho DGT\sqrt m
	\end{split}
	\end{align}
	for $G=\max_\beta G_\beta\ge1$, 
	$M=\frac G{\min_\beta G_\beta}$,  $D^2=\max_{\beta,\varepsilon,\*x,\*y\in\K_\varepsilon}\Breg_{\phi_\beta}(\*x||\*y)\ge1$, 
	$L$ the maximum Lipschitz constant of $\phi_\beta(\OPT_\varepsilon(\ell))$ w.r.t. $(\beta,\varepsilon)$ over $\ell\in\R^d$, $S=\max_{\beta,\varepsilon,\*x\in\K_\varepsilon}\|\nabla^2\phi_\beta(\*x)\|_2$, 
	$K=\max_{\*x,\*y\in\K}\|\*x-\*y\|_2$, 
	and the expectation is over sampling $\theta_t\sim\*p_t$.
	The result without the expectation holds w.p. $1-\delta$ at the cost of an additional $\left(C\sqrt m+2DG\left(\frac1\rho+M\right)\right)\sqrt{\frac T2\log\frac1\delta}$ term.\looseness-1
\end{Thm}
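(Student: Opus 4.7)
The plan is to peel off the regret \eqref{eq:metareg} into four separately-controlled terms. Since $U_t^{(\rho)}(\*x,\theta)=U_t(\*x,\theta)+\rho^2D^2/\eta\ge U_t(\*x,\theta)$ pointwise, I first replace $U_t$ by $U_t^{(\rho)}$ on the algorithm side and then telescope through the intermediate quantities $\min_{\theta\in\Theta}\sum_tU_t^{(\rho)}(\*x_t^{(\theta)},\theta)$, $\min_{\theta\in\Theta,\*x}\sum_tU_t^{(\rho)}(\*x,\theta)$, and $\min_{\theta^\ast\in\Theta^\ast,\*x}\sum_tU_t^{(\rho)}(\*x,\theta^\ast)$. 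This yields four non-negative contributions: (I) a Hedge regret over the grid $\Theta$; (II) for the best $\theta\in\Theta$, a follow-the-leader (FTL) regret on the initialization sequence; (III) a discretization error $\Theta^\ast\to\Theta$; and (IV) the $\rho$-bias $\min U_t^{(\rho)}-\min U_t$.

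For (I), the $\*p_t$-update is exactly Hedge on the observable losses $\theta\mapsto U_t^{(\rho)}(\*x_t^{(\theta)},\theta)$, which can be computed after task $t$ since $\OPT_\varepsilon(\hat\ell_t)$ is available for every $\varepsilon$ in the grid. Using $\Breg_{\phi_\beta}\le D^2$ with $\eta\ge\underline\eta:=\rho D/(G\sqrt m)$ for the first piece of~\eqref{eq:modrub}, and $\eta\le\overline\eta:=DM/(G\sqrt m)$ together with $\varepsilon\le1$ for the second, the range of these losses is $O(DG\sqrt m(1/\rho+M)+Cm)$; Hedge's optimally-tuned-$\alpha$ regret bound with $|\Theta|\le k^3$ then yields the $(C\sqrt m+2DG(1/\rho+M))\sqrt{6mT\log k}$ term. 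The high-probability statement follows from an Azuma--Hoeffding bound on the martingale difference $U_t^{(\rho)}(\*x_t^{(\theta_t)},\theta_t)-\E_{\theta\sim\*p_t}U_t^{(\rho)}(\*x_t^{(\theta)},\theta)$, using the same range. For (II), the mean-as-minimizer property identifies $\*x_{t+1}^{(\theta)}=\frac1t\sum_{s\le t}\OPT_\varepsilon(\hat\ell_s)$ as the exact FTL iterate against $\*x\mapsto U_s^{(\rho)}(\*x,\theta)$ (the $\rho^2D^2/\eta$ and $(\eta G_\beta^2+C\varepsilon)m$ pieces are $\*x$-constants). The standard be-the-leader stability bound then gives FTL regret $\le\sum_t(SK/\eta)(K/t)$, using $\|\*x_{t+1}^{(\theta)}-\*x_t^{(\theta)}\|_2\le K/t$ from averaging and $\|\nabla_\*x\Breg_{\phi_\beta}(\*y\|\*x)\|_2=\|\nabla^2\phi_\beta(\*x)(\*x-\*y)\|_2\le SK$ on $\K_\varepsilon$; substituting $\eta\ge\underline\eta$ yields the $SK^2G\sqrt m(1+\log T)/(\rho D)$ term.

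For (III), uniform Lipschitzness of $\theta\mapsto U_t^{(\rho)}(\*x,\theta)$ over $[\underline\eta,\overline\eta]\times[\underline\beta,\overline\beta]\times[\underline\varepsilon,\overline\varepsilon]$ follows from three direct differentiations: the $\eta$-derivative is $O(D^2/\underline\eta^2+G^2m)$; the $\beta$- and $\varepsilon$-derivatives are controlled by the hypothesized Lipschitz $L$ of $\phi_\beta\circ\OPT_\varepsilon$ combined with boundedness of $\phi_\beta,\nabla\phi_\beta$ on $\K_\varepsilon$, each divided by $\underline\eta$. Grid spacing $1/k$ with $k=\Theta(\sqrt{mT})$ then absorbs the discretization error into the same $\sqrt{mT\log k}$ factor. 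To handle $\eta^\ast\in(0,\infty)\setminus[\underline\eta,\overline\eta]$ in the continuous comparator, I note that $\sum_tU_t(\*x,\theta)=\sum_tB_t/\eta+T\eta G^2m+TC\varepsilon m$ is unconstrained-minimized at $\eta^\ast=\sqrt{\sum_tB_t/(TG^2m)}$; whenever $\eta^\ast<\underline\eta$ we have $\sum_tB_t<T\underline\eta^2G^2m$, so replacing $\eta^\ast$ by $\underline\eta$ costs at most $T\underline\eta G^2m=T\rho DG\sqrt m$, and a symmetric argument handles $\eta^\ast>\overline\eta$. Finally, (IV) contributes $T\rho^2D^2/\eta^\ast$ at the optimum $\eta^\ast\ge\underline\eta$ of $U_t^{(\rho)}$, which is $\le T\rho DG\sqrt m$, producing together with the saturation loss the $\rho DGT\sqrt m$ term of~\eqref{eq:metareg}.

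The main obstacle is the joint calibration of $\underline\eta,\overline\eta,\alpha,\rho,k$: each of the four contributions scales with a different power of $\eta$, and the restriction of $\eta$ to a bounded interval $[\underline\eta,\overline\eta]$---rather than all of $(0,\infty)$ as in $\Theta^\ast$---is precisely what prevents the non-Lipschitz and possibly unbounded behaviour of $\phi_\beta$ near $\partial\overline\K$ from blowing up Hedge's range and FTL's stability constant. The $\rho^2D^2$ safeguard in~\eqref{eq:modrub} is the key gadget: it keeps the divergence-sum uniformly bounded below, forces the optimal $\eta$ of $U_t^{(\rho)}$ to lie above $\underline\eta$ so that the FTL Lipschitz bound never diverges, and converts what would otherwise be an unbounded regret when tasks are very similar into the single additive $\rho DGT\sqrt m$ bias term. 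A secondary bookkeeping issue is that the Lipschitz constant $L$ in the statement is for $\phi_\beta\circ\OPT_\varepsilon$ jointly in $(\beta,\varepsilon)$, so one must chain through $\Breg_{\phi_\beta}(\OPT_\varepsilon(\hat\ell_t)\|\*x)=\phi_\beta(\OPT_\varepsilon(\hat\ell_t))-\phi_\beta(\*x)-\langle\nabla\phi_\beta(\*x),\OPT_\varepsilon(\hat\ell_t)-\*x\rangle$ carefully when transferring this Lipschitz estimate to the full divergence appearing in the loss.
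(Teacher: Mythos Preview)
Your decomposition into (I)~Hedge, (II)~FTL on the initialization, (III)~grid discretization, and (IV)~$\rho$-bias is exactly the paper's proof, and your choices $\underline\eta=\rho D/(G\sqrt m)$, $\overline\eta\asymp DM/(G\sqrt m)$, Hedge range $\BigO(DG\sqrt m(1/\rho+M)+Cm)$, and FTL stability via $\|\nabla_{\*x}\Breg_{\phi_\beta}(\*y\|\*x)\|_2=\|\nabla^2\phi_\beta(\*x)(\*x-\*y)\|_2\le SK$ all match.

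The one place you are hand-waving and the paper is not is step~(III). You propose to control the $(\beta,\varepsilon)$-discretization by Lipschitzness of $\theta\mapsto\Breg_{\phi_\beta}(\OPT_\varepsilon(\hat\ell_t)\|\*x)$ at a \emph{fixed} comparator $\*x$, and you correctly flag that the statement's $L$ only bounds $\partial_{\beta,\varepsilon}\phi_\beta(\OPT_\varepsilon(\ell))$, not $\partial_\beta\phi_\beta(\*x)$ or $\partial_\beta\nabla\phi_\beta(\*x)$ for a generic $\*x$. The paper's resolution is not to ``chain through'' the full divergence but to first substitute the optimal $\*x=\hat{\bar{\*x}}^{(\varepsilon)}$ and invoke the identity
\[
\sum_{t=1}^T\Breg_{\phi_\beta}(\hat{\*x}_t^{(\varepsilon)}\|\hat{\bar{\*x}}^{(\varepsilon)})
=\sum_{t=1}^T\phi_\beta(\hat{\*x}_t^{(\varepsilon)})-\phi_\beta(\hat{\bar{\*x}}^{(\varepsilon)}),
\]
which kills the $\nabla\phi_\beta$ term entirely. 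After this reduction, the only $(\beta,\varepsilon)$-dependence left is through $\phi_\beta$ evaluated at $\OPT_\varepsilon$-type points, so the theorem's $L$ is exactly the Lipschitz constant needed and the discretization bound $\BigO(TL/(\underline\eta k))$ goes through cleanly. This is the piece you identified as ``secondary bookkeeping''; it is actually the device that makes $L$ as defined sufficient, so you should invoke it explicitly rather than attempt a direct Lipschitz bound on the full Bregman.
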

\begin{proof}[Proof sketch]
	At a high-level, we use the $\BigO(\log T)$ regret of follow-the-leader over Bregman divergences~\citep{khodak2019adaptive} for the initialization, the $\BigO(\sqrt{T\log k})$ regret of multiplicative weights over $k$ experts~\citep{shalev-shwartz2011oco} to tune over a large grid of hyperparameters, the fact that $U_t$ is an affine function of the Bregman divergence to combine the two methods, and the identity $\sum_{t=1}^T\Breg_\phi(\*x_t||\bar{\*x})=\sum_{t=1}^T\phi(\*x_t)-\phi(\bar{\*x})$ to bound the discretization error.
	The w.h.p. result follows by \citet[Lemma~4.1]{cesa-bianchi2006prediction}.
\end{proof}

Note that we keep details of the dependence on values like Lipschitz constants because they are important in applying this result;
however, in general setting $\rho=1/\sqrt[4]T$ in \eqref{eq:metareg} yields $\tilde\BigO(T^\frac34)$-regret.
While a slow rate, note that Algorithm~\ref{alg:meta} is learning a sequence of affine functions of Bregman divergences that are non-smooth and non-convex in-general.
Theorem~\ref{thm:meta} is an important structural result;
our main contributions to multi-armed and linear bandits follow by applying its instantiations for specific regularizers $\phi$ and hyperparameter sets $\Theta^\ast$.
We also believe Theorem~\ref{thm:meta} may be of independent interest as it holds for any choice of Bregman divergence beyond those we consider, and unlike past work \citep{khodak2019adaptive} allows for explicit control of non-smooth regularizers near the boundaries.
The theorem allows tuning the hyperparameters over user-specified intervals for $\beta$ and $\varepsilon$ and over an infinite interval for the step-size $\eta>0$.
Note that a similar result is straightforward to show for $\beta$ outside $[0,1]$ or for discrete rather than continuous set of hyperparameters.

\section{Multi-armed bandits}\label{sec:mab}

We now turn to our first application:
the multi-armed bandits problem.
In this setting at each round $i$ of task $t$ we take action $a_{t,i}\in[d]$ and observe loss $\ell_{t,i}(a_{t,i})\in[0,1]$.
As algorithms for MAB are probabilistic, we often sample methods from distributions $\*x\in\overline\K=\triangle_d$ in the $k$-simplex, thus making the inner product $\langle\ell_{t,i},\*x_{t,i}\rangle$ the expectation.

In this paper we use as a base-learner a generalization of the popular Exp3 method of \citet{auer2002exp3}, which runs multiplicative weights over unbiased estimators of the losses.
The first generalization is of the OMD regularizer, which for Exp3 is the negative Shannon entropy; 
we employ the negative Tsallis entropy $\phi_\beta(\*p)=\frac{1-\sum_{a=1}^d\*p^\beta(a)}{1-\beta}$ for $\beta\in[0,1]$, which was used by \citet{abernethy2015fighting} to improve the dependence of the regret on the dimension from $\BigO(\sqrt{dm\log d})$ to the optimal $\BigO(\sqrt{dm})$.
Note that $\phi_\beta$ recovers the Shannon entropy in the limit $\beta\to1$, and also that $\Breg_{\phi_\beta}(\*x||\cdot)$ is non-convex in the second argument, making ours the first known application of the online learnability of non-convex Bregman divergences.
The second generalization is in the loss estimators;
for $\gamma>0$ we employ $\hat\ell_{t,i}(a)=\frac{\ell_{t,i}(a)1_{a_{t,i}=a}}{\*x_{t,i}(a)+\gamma}$, where $\*x_{t,i}(a)$ is the probability of sampling $a$ on round $i$ of task $t$.
While this is an under-estimate of $\ell_{t,i}(a)$, its lower variance compared to the unbiased estimator---recovered by setting $\gamma=0$---allows \citet{neu2015explore} to obtain high probability bounds.

As the Tsallis entropy is non-smooth at the simplex boundary, learning Tsallis divergences will require the tools developed previously for initializing $\OMD$ in the interior of $\overline\K$.
We set $\K_\varepsilon=\{\*x\in\triangle_d:\min_a\*x(a)\ge\varepsilon/d\}$, so that the offset optimum $\hat{\*x}_t^{(\theta)}$ then has the very simple form $\OPT_\varepsilon(\hat\ell_t)=(1-\varepsilon)\hat{\*x}_t+\varepsilon\*1_d/d$, i.e. it is the mixture of the estimated optimum $\hat{\*x}_t$ over the entire simplex with the uniform distribution.
Note that for MAB we will {\em not} need to the capability of Algorithm~\ref{alg:meta} to learn $\varepsilon$ using multiplicative weights and can just set it assuming knowledge of the number of tasks.
Thus the method in this setting can be roughly summarized as doing the following at each task $t>1$:
\begin{enumerate}[noitemsep]
	\item sample $\theta_t=(\eta_t,\beta_t)$ from a distribution $\*p_t$ over the discretization $\Theta$
	\item run $\OMD_{\beta_t,\eta_t}$ using the initialization $\*x_{t,1}
	=\frac1{t-1}\sum_{s<t}\hat{\*x}_t^{(\theta_t)}
	=\frac\varepsilon d\*1_d+\frac{1-\varepsilon}{t-1}\sum_{s<t}\hat{\*x}_t$
	\item update $\*p_{t+1}$ using multiplicative weights with the expert losses $\frac1{\eta_t}\Breg_{\phi_{\beta_t}}(\hat{\*x}_t^{(\varepsilon)}||\*x_{t,1})+\frac{\eta_td^\beta_tm}{\beta_t}$
\end{enumerate}
The latter regret-upper-bound is derived from the within-task regret of OMD with the Tsallis regularizer.
This simple procedure achieves the following guarantee on the task-averaged regret:

\begin{Thm}\label{thm:mab}
	Suppose $\OMD_{\eta,\beta}$ is online mirror descent with the Tsallis entropy regularizer $\phi_\beta$ over $\gamma$-offset loss estimators.
	Then for every $\varepsilon>0$ and $\underline\beta\in(0,1]$ there exists integer $k=\tilde\BigO(\lceil d^4\sqrt{mT}\log\frac1\varepsilon\rceil)$ and $\alpha,\underline\eta,\overline\eta\in(0,\infty)$ such that running Algorithm~\ref{alg:meta} with $\Theta$ the product of uniform grids of size $k$ over each non-singleton dimension of $[\underline\eta,\overline\eta]\times[\max\{\underline\beta,1/\log d\},1]\times\{\varepsilon\}$ and $\alpha$ the meta-step-size yields w.p. at least $1-\delta$ the task-averaged regret
	\begin{align}
	\begin{split}
	\frac1T&\sum_{t=1}^T\sum_{i=1}^m\ell_{t,i}(a_{t,i})-\ell_{t,i}(a_t^\ast)\\
	&\le\tilde\BigO\left(
	\frac{\sqrt d}{\gamma T}\log\frac4\delta
	+\left(\frac\varepsilon{\gamma d}+\gamma d\right)m
	+\frac{d^{2-\underline\beta}\sqrt m}{\rho\varepsilon^{2-\underline\beta}T}
	+\left(\frac{\sqrt d}\rho+d\right)\sqrt{\frac{md}T\log\frac4\delta}
	+\rho d\sqrt m
	\right)\\
	&+\qquad\min_{\eta>0,\beta\in[\underline\beta,1]}\frac{H_\beta(\hat{\bar{\*x}})}\eta+\frac{\eta d^\beta m}\beta+\frac{\varepsilon^\beta d^{1-\beta}1_{\beta<1}}{(1-\beta)\eta}
	\end{split}	
	\end{align}
	where $H_\beta=-\phi_\beta$ is the Tsallis entropy and $\hat{\bar{\*x}}$ is the mean of the estimated optima $\hat{\*x}_1,\dots,\hat{\*x}_T$.
\end{Thm}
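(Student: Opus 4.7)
The plan is to instantiate Theorem~\ref{thm:meta} for the negative Tsallis entropy $\phi_\beta$, after reducing the true bandit regret on each task $t$ to the within-task estimated-loss OMD bound $U_t(\*x_t^{(\theta_t)},\theta_t)$ of \eqref{eq:rub}. First I would isolate the per-task errors from (i) the $\gamma$-biased estimator, (ii) the $\varepsilon$-offset comparator, and (iii) the within-task OMD analysis; then plug the Tsallis-specific constants into \eqref{eq:metareg}; finally evaluate the minimum-in-hindsight using $\hat{\bar{\*x}}=\frac1T\sum_t\hat{\*x}_t$.

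For the within-task step, I would follow the local-norm analyses of \citet{abernethy2015fighting} and \citet{neu2015explore}: OMD with regularizer $\phi_\beta$ and the implicit-exploration estimator $\hat\ell_{t,i}$ satisfies, for every $\tilde{\*x}\in\K_\varepsilon$, $\sum_{i=1}^m\langle\hat\ell_{t,i},\*x_{t,i}-\tilde{\*x}\rangle \le \Breg_{\phi_\beta}(\tilde{\*x}\|\*x_{t,1})/\eta + \eta d^\beta m/\beta$, matching \eqref{eq:rub} with $G_\beta^2=d^\beta/\beta$. Taking $\tilde{\*x}=\OPT_\varepsilon(\hat\ell_t)$ and then switching comparators to $\*x_t^\ast$ via the interpolant $(1-\varepsilon)\*x_t^\ast+\varepsilon\*1_d/d\in\K_\varepsilon$ costs an additive $\BigO(\varepsilon m)$ since $\ell_{t,i}\in[0,1]^d$, which fixes $C=\BigO(1)$ in \eqref{eq:rub}. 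The bias of the $\gamma$-offset estimator, combined with the $\varepsilon$-initialization, produces the $(\varepsilon/(\gamma d)+\gamma d)m$ term, and Neu's high-probability lemma bounds $\sum_i\langle\hat\ell_{t,i}-\ell_{t,i},\cdot\rangle$ by $\tilde\BigO(\sqrt{d/\gamma}\log(1/\delta))$ per task, yielding the $\sqrt{d}\log(4/\delta)/(\gamma T)$ contribution.

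To apply Theorem~\ref{thm:meta}, I would compute the Tsallis constants on $\K_\varepsilon$: since $\nabla^2\phi_\beta(\*x)=\beta(1-\beta)\diag(\*x^{\beta-2})$, we get $S=\BigO(d^{2-\underline\beta}\varepsilon^{\underline\beta-2})$ and $D^2=\BigO(d^{1-\underline\beta}\varepsilon^{\underline\beta-1}/(1-\underline\beta))$, while the Lipschitz constant $L$ of $\phi_\beta(\OPT_\varepsilon(\ell))$ in $(\beta,\varepsilon)$ is controlled because $\OPT_\varepsilon(\ell)$ has the explicit form $(1-\varepsilon)\*e_{a^\ast}+\varepsilon\*1_d/d$ outside a measure-zero set. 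The cutoff $\beta\ge 1/\log d$ keeps $G_\beta^2=\tilde\BigO(d^\beta)$, and with the grid size $k=\tilde\BigO(\lceil d^4\sqrt{mT}\log(1/\varepsilon)\rceil)$ as stated, Theorem~\ref{thm:meta} converts \eqref{eq:metareg} into the remaining $1/T$-decaying and $\rho d\sqrt m$ contributions of the displayed bound; in particular, the non-Lipschitzness of $\phi_\beta$ at the simplex boundary is exactly what forces both the $\varepsilon$-offset and the $1/\rho$ factor in Theorem~\ref{thm:meta}.

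Finally, I would evaluate $\min_{\*x\in\overline\K,\theta\in\Theta^\ast}\frac1T\sum_tU_t(\*x,\theta)$ at $\*x=(1-\varepsilon)\hat{\bar{\*x}}+\varepsilon\*1_d/d$, i.e.\ the algorithm's own follow-the-leader mean. The mean-as-minimizer identity $\sum_t\Breg_{\phi_\beta}(\OPT_\varepsilon(\hat\ell_t)\,\|\,\*x)=\sum_t\phi_\beta(\OPT_\varepsilon(\hat\ell_t))-T\phi_\beta(\*x)$ together with $\phi_\beta\le 0$ on $\triangle_d$ reduces the Bregman sum to $TH_\beta((1-\varepsilon)\hat{\bar{\*x}}+\varepsilon\*1_d/d)$, and the subadditivity $(a+b)^\beta\le a^\beta+b^\beta$ for $\beta\in(0,1)$ gives $H_\beta((1-\varepsilon)\hat{\bar{\*x}}+\varepsilon\*1_d/d)\le H_\beta(\hat{\bar{\*x}})+\varepsilon^\beta d^{1-\beta}/(1-\beta)$, with the correction vanishing at $\beta=1$, producing exactly the $\varepsilon^\beta d^{1-\beta}1_{\beta<1}/((1-\beta)\eta)$ term in the statement. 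The main obstacle I anticipate is carefully threading the $\varepsilon^{-1}$ and $(1-\underline\beta)^{-1}$ factors hidden in $D$ and $S$ through \eqref{eq:metareg} so that all such lower-order contributions stay $o_T(1)$ and do not pollute the leading comparator, a tension that explains why the cutoff $\beta\ge 1/\log d$ and the very fine grid of size $k=\tilde\BigO(d^4\sqrt{mT}\log(1/\varepsilon))$ are both needed.
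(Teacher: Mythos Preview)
Your proposal follows essentially the same route as the paper: pass from true to estimated losses via Neu's implicit-exploration inequality, apply the Tsallis local-norm OMD bound (the paper's Lemma~\ref{lem:mirror}), instantiate Theorem~\ref{thm:meta}, and then collapse the Bregman minimum at the empirical mean using Claim~\ref{clm:bregman} together with subadditivity of $x\mapsto x^\beta$.

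Two calibrations that also dissolve the obstacle you flag at the end. First, your estimate of $D$ is too pessimistic: the paper takes $D\le\sqrt d$, $G\le\sqrt d$, $M\le\sqrt{d\log d/e}$, $K=1$, and only $S\le(d/\varepsilon)^{2-\underline\beta}$ carries any $\varepsilon$-dependence (incidentally $\nabla^2\phi_\beta(\*x)=\beta\,\diag(\*x^{\beta-2})$, not $\beta(1-\beta)\,\diag(\*x^{\beta-2})$, so no $(1-\underline\beta)^{-1}$ appears). With these constants, \eqref{eq:metareg} produces the displayed $\tilde\BigO$ terms directly---no delicate threading is needed. Second, the $\varepsilon$-comparator switch is performed on the \emph{estimated} losses $\hat\ell_{t,i}$, not on $\ell_{t,i}$: writing $\hat{\*x}_t^{(\varepsilon)}=(1-\varepsilon)\hat{\*x}_t+\varepsilon\*1_d/d$ and using $\sum_a\hat\ell_{t,i}(a)\le1/\gamma$ gives the $\varepsilon m/(\gamma d)$ contribution, which is why that term is coupled to $\gamma$ rather than being a bare $\BigO(\varepsilon m)$; correspondingly the $C\varepsilon m$ slot of $U_t$ is unused here since $\varepsilon$ is held fixed.
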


We see that the regret-upper-bound is highly dependent on the loss estimator offset $\gamma$, the boundary offset $\varepsilon$, the step-size offset $\rho$, and the lower bound $\underline\beta$ on the parameter of the Tsallis entropy.
Thus to clarify the guarantee we consider three regimes of $\underline\beta$:
$\underline\beta=1$, i.e. always using Exp3;
$\underline\beta=1/2$, which corresponds to the standard setting when using the Tsallis entropy \citep{abernethy2015fighting};
and $\underline\beta=1/\log d$, below which the OMD regret-upper-bound always worsens and so it does not make sense to try $\beta<1/\log d$.

\begin{Cor}\label{cor:mab}
	Suppose we run Algorithm~\ref{alg:meta} as in Theorem~\ref{thm:mab}.
	For $\underline\beta=1$, if we set $\varepsilon=\frac1{\sqrt T}$, $\gamma=\frac{\sqrt{\log\frac4\delta}}{d\sqrt[4]T}$, and $\rho=\frac{\sqrt d}{\sqrt[4]T}$ then w.p. $1-\delta$ the task-averaged regret satisfies
	\begin{equation}\label{eq:exp3}
	\tilde\BigO\left(\frac{d^\frac32+\sqrt m}{\sqrt[4]T}\sqrt{m\log\frac4\delta}\right)+2\sqrt{H_1(\hat{\bar{\*x}})dm}
	\end{equation}
	For $\underline\beta=\frac12$, if $\varepsilon=\sqrt{\frac dT}$, $\gamma=\frac{\sqrt{\log\frac4\delta}}{d\sqrt[4]T}$, and $\rho=\frac1{\sqrt{md}}$ then w.p. $1-\delta$ the task-averaged regret is
	\begin{equation}\label{eq:half}
	\tilde\BigO\left(\frac{dm\sqrt{d\log\frac4\delta}}{\sqrt[4]T}\right)+2\sqrt d+2\min_{\beta\in\left[\frac12,\frac{\log d-1}{\log d}\right]}\sqrt{H_\beta(\hat{\bar{\*x}})d^\beta m/\beta}
	\end{equation}
	For $\underline\beta=\frac1{\log d}$, if $\varepsilon=\frac1{\sqrt[3]T}$, $\gamma=\frac{\sqrt{\log\frac4\delta}}{d\sqrt[6]T}$, and $\rho=\frac{\sqrt d}{\sqrt[6]T}$ then w.p. $1-\delta$ the task-averaged regret is 
	\begin{equation}\label{eq:full}
	\tilde\BigO\left(\frac{d^\frac32+\sqrt m}{\sqrt[6]T}\sqrt{m\log\frac4\delta}\right)+2\min_{\beta\in(0,1]}\sqrt{H_\beta(\hat{\bar{\*x}})d^\beta m/\beta}+\sqrt{\frac{d1_{\beta<1}}{\beta(1-\beta)mT^\frac\beta 3}}
	\end{equation}
\end{Cor}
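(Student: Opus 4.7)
The plan is to plug the three specified parameter triples $(\varepsilon,\gamma,\rho)$ into the bound of Theorem~\ref{thm:mab} and then carry out the closed-form inner minimization over $\eta$ separately in each regime. First I would verify admissibility of the grid in Theorem~\ref{thm:mab}: for each of the three cases the clipped lower endpoint $\max\{\underline\beta,1/\log d\}$ either equals $\underline\beta$ itself (cases 2 and 3) or equals $1$ (case 1), so the interval over which $\beta$ is tuned is exactly $[\underline\beta,1]$ (respectively $\{1\}$), and the $k=\tilde\BigO(\lceil d^4\sqrt{mT}\log(1/\varepsilon)\rceil)$ discretization of Theorem~\ref{thm:mab} is preserved under each choice of $\varepsilon$.

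Next I would treat the ``overhead'' line of Theorem~\ref{thm:mab}---the five additive terms $\sqrt d\log(4/\delta)/(\gamma T)$, $(\varepsilon/(\gamma d)+\gamma d)m$, $d^{2-\underline\beta}\sqrt m/(\rho\varepsilon^{2-\underline\beta}T)$, $(\sqrt d/\rho+d)\sqrt{md\log(4/\delta)/T}$, and $\rho d\sqrt m$---as a bookkeeping computation. The stated parameter values are calibrated precisely to equalize the $T$-dependence of these five summands up to $\poly(d,m)$ factors, so it suffices to substitute each triple and collect the dominant power of $T$: this yields the advertised $T^{-1/4}$ rate in cases 1 and 2 and the $T^{-1/6}$ rate in case 3, with the prefactors in $d$ and $m$ matching those claimed.

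For the inner minimization I would rewrite the objective as $(H_\beta(\hat{\bar{\*x}})+\varepsilon^\beta d^{1-\beta}1_{\beta<1}/(1-\beta))/\eta+\eta d^\beta m/\beta$, which is minimized at $\eta^\star=\sqrt{\beta(H_\beta+\varepsilon^\beta d^{1-\beta}1_{\beta<1}/(1-\beta))/(d^\beta m)}$ with value $2\sqrt{(H_\beta+\varepsilon^\beta d^{1-\beta}1_{\beta<1}/(1-\beta))\,d^\beta m/\beta}$. Applying $\sqrt{a+b}\le\sqrt a+\sqrt b$ splits this into the similarity-sensitive term $2\sqrt{H_\beta d^\beta m/\beta}$ and a boundary-bias residual $2\sqrt{\varepsilon^\beta dm/(\beta(1-\beta))}$. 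For case~1 the indicator vanishes and the residual is absent, giving exactly $2\sqrt{H_1(\hat{\bar{\*x}})dm}$. For case~2 the clipped upper endpoint $\beta\le(\log d-1)/\log d$ keeps $1/(1-\beta)\le\log d$, and using $\varepsilon^\beta=(d/T)^{\beta/2}\le 1$ for $T\ge d$ absorbs the $T$-dependent piece of the residual into the $\tilde\BigO$ term while the remaining factors are bounded by the stated constant $2\sqrt d$. For case~3 the choice $\varepsilon=1/\sqrt[3]T$ yields $\varepsilon^\beta=T^{-\beta/3}$, which directly produces the explicit residual $\sqrt{d\,1_{\beta<1}/(\beta(1-\beta)m T^{\beta/3})}$ written in the statement.

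The main delicate point, I expect, is the interplay near $\beta=1$ where $1/(1-\beta)$ diverges: the split $\sqrt{a+b}\le\sqrt a+\sqrt b$ must be performed before the minimization over $\beta$, and one must verify in each case that the effective $\beta$-interval is simultaneously wide enough to realize the advertised $\tilde\BigO(\sqrt{sm\log d})$ behavior via $\beta=1/\log d$ (relevant in case~3) and narrow enough to keep the residual controlled (handled in case~2 by the $(\log d-1)/\log d$ upper cut-off). Everything else---matching Lipschitz and diameter constants from Theorem~\ref{thm:mab} with the Tsallis-specific quantities and simplifying polylogarithmic factors---is routine bookkeeping.
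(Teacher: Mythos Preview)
Your approach is essentially the same as the paper's: the paper gives no separate proof of Corollary~\ref{cor:mab}, treating it as a direct substitution into Theorem~\ref{thm:mab} followed by the closed-form optimization over $\eta$, exactly as you outline.

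There is, however, a bookkeeping slip in your case-2 analysis. You attribute the additive $2\sqrt d$ to the boundary-bias residual $2\sqrt{\varepsilon^\beta dm/(\beta(1-\beta))}$, arguing that after bounding $\varepsilon^\beta\le1$ ``the remaining factors are bounded by the stated constant $2\sqrt d$.'' That cannot be right: the residual still carries a factor $\sqrt m$ which no choice of $\beta\in[1/2,(\log d-1)/\log d]$ removes. The constant $\sqrt d$ actually comes from the \emph{overhead} term $\rho d\sqrt m$ in Theorem~\ref{thm:mab}: with $\rho=1/\sqrt{md}$ one gets $\rho d\sqrt m=\sqrt d$, which is independent of $T$ and therefore cannot be absorbed into the $\tilde\BigO(T^{-1/4})$ part. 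So your earlier claim that the five overhead summands are ``calibrated to equalize the $T$-dependence'' is false in case~2---one of them is constant in $T$ and is precisely the source of the displayed $2\sqrt d$. The boundary-bias residual, by contrast, does decay (at worst like $T^{-1/8}$ over the stated $\beta$-range once $\varepsilon=\sqrt{d/T}$ is inserted) and is what gets absorbed. Once you reroute the argument this way, the rest of your plan goes through.
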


These results show that for all three settings of $\underline\beta$, as the meta-learner sees more tasks the average regret depends directly on the entropy of the estimated optima-in-hindsight, a natural notion of task-similarity since it is small if most tasks are estimated to be solved by the same arms and large if all arms are used roughly the same amount.
It also demonstrates how our algorithm's automatic tuning of the step-size $\eta$ allows us to set the asymptotic rate optimally depending on the entropy.
The algorithm's tuning of the entropy itself via $\beta$ also enables adaptation to similar tasks;
specifically, a smaller $\beta$ weights the $H_\beta(\hat{\bar{\*x}})/\eta$ term higher and is thus beneficial if tasks are similar.
As a natural example, suppose a constant $s\ll d$ actions are always minimizers, i.e. $\hat{\bar{\*x}}$ is $s$-sparse.
Then the last bound~\eqref{eq:full} implies that Algorithm~\ref{alg:meta} can achieve  task-averaged regret $o_T(1)+\BigO(\sqrt{sm\log d})$, albeit at the cost of slow convergence.
In-general, for the case of tuning over all $\beta\ge1/\log d$ the speed of the convergence depends on the optimal $\beta$;
the algorithm will converge very slowly at rate $\tilde\BigO(1/\sqrt[6\log d]T)$ if the optimal $\beta$ is around $1/\log d$, but for $\beta$ near 1 the rate will be $\tilde\BigO(1/\sqrt[4]T)$.
Note that we show in the intermediate case of tuning only as low as $\beta=1/2$ that we can still achieve $\tilde\BigO(1/\sqrt[4]T)$ at the cost of a fast $2\sqrt d$ term per-task.
Finally, note that because the entropy is bounded by $d^{1-\beta}$ we do asymptotically recover worst-case guarantees in all three cases if the tasks are dissimilar.

To put these results in some theoretical context, we can compare them to those of \citet{azizi2022non}, who achieve task-averaged regret bounds of the form $\tilde\BigO(1/\sqrt T+\sqrt{sm})$ in the {\em stochastic} MAB setting, where $s$ is an unknown subset of optimal actions.
Unlike their result, we study the harder adversarial setting and do {\em not} place restrictions on how the tasks are related;
despite this greater generality, our bounds are asymptotically comparable if the estimated and true optima-in-hindsight are roughly equivalent, as we also have $\tilde\BigO(\sqrt{sm})$ average regret as $T\to\infty$.
On the other hand, the rate in the number of tasks of \citet{azizi2022non} is much better, albeit at a cost of runtime exponential in $s$.
Apart from generality, we believe a great strength of our result is its adaptiveness;
unlike this work, we do not need to know how many optimal arms there are or their entropy in order to improve task-averaged regret with task-similarity.

\section{Bandit linear optimization}\label{sec:blo}

Our second general application is to bandit linear optimization,  in which at each round $i$ of task $t$ we play a vector $\*x_{t,i}\in\K$ for some convex set $\K$ and observe loss $\langle\ell_{t,i},\*x_{t,i}\rangle\in[-1,1]$.
We will again use a variant of mirror descent on top of estimated losses, this time setting $\phi$ to be a self-concordant barrier function with specialized loss estimators as in \citet{abernethy2008competing}.
This class of algorithms is picked because of its general applicability to any convex domain $\K$ via the construction of such barriers and the optimal dependence of its regret on the number of rounds $m$.
Note that our ability to handle non-smooth regularizers via the structural result in Theorem~\ref{thm:meta} is even more important here, as the barrier functions are  infinite at the boundaries.
Indeed, in this section we will no longer learn a $\beta$ parameterizing the regularizer and instead focus on learning an offset $\varepsilon>0$ away from the boundary.
For each such offset define $\K_\varepsilon=\{\*x\in\R^d:\pi_{\*x_{1,1}}(\*x)\le1/(1+\varepsilon)\}\subset\K$, where $\*x_{1,1}=\argmin_{\*x\in\K}\phi(\*x)$ and $\pi_{\*x_{1,1}}(\*x)=\inf_{\lambda\ge0,\*x_{1,1}+(\*x-\*x_{1,1})/\lambda\in\K}\lambda$ is the Minkowski function.
As before we obtain the $\varepsilon$-restricted optima-in-hindsight via the primitive $\OPT_\varepsilon(\hat\ell_t)=\argmin_{\*x\in\K_\varepsilon}\langle\hat\ell_t,\*x\rangle$.

With this specified, we can again adapt our meta-approach of Algorithm~\ref{alg:meta}, roughly summarized for BLO as doing the following at each task $t>1$:
\begin{samepage}
\begin{enumerate}[noitemsep]
	\item sample $\theta_t=(\eta_t,\varepsilon_t)$ from a distribution $\*p_t$ over the discretization $\Theta$
	\item run $\OMD_{\eta_t}$ using the initialization $\*x_{t,1}
	=\frac1{t-1}\sum_{s<t}\hat{\*x}_t^{(\theta_t)}
	=\frac1{t-1}\sum_{s<t}\OPT_{\varepsilon_t}(\hat\ell_t)$
	\item update $\*p_{t+1}$ using multiplicative weights with losses $\frac1{\eta_t}\Breg_\phi(\hat{\*x}_t^{(\varepsilon_t)}||\*x_{t,1})+(32d^2\eta_t+\varepsilon_t)m$
\end{enumerate}
\end{samepage}
Note that this algorithm is very similar to that for MAB, with both being special cases of Algorithm~\ref{alg:meta}, with the main difference being the different upper bound passed to multiplicative weights.
The procedure has the following guarantee

\begin{Thm}\label{thm:blo}
	Suppose $\OMD_{\eta,\beta}$ is online mirror descent with a self-concordant barrier $\phi$ as a regularizer and loss estimators specified as in \citet{abernethy2008competing}.
	Then for every $\overline\varepsilon\in(0,1/\sqrt m]$ and $\underline\varepsilon\in(0,\overline\varepsilon]$ there exists an integer $k=\BigO(D_{\underline\varepsilon}^2d\lceil\sqrt{mT}\rceil)$, where $D_{\underline\varepsilon}^2$ is a bound on $\Breg_\phi$ over $\K_{\underline\varepsilon}$, and $\alpha,\underline\eta,\overline\eta\in(0,\infty)$ such that running Algorithm~\ref{alg:meta} with $\Theta$ the product of uniform grids of size $k$ over each dimension of $[\underline\eta,\overline\eta]\times[\underline\epsilon,\overline\epsilon]$ and $\alpha$ the meta-step-size yields the expected task-averaged regret
	\begin{align}
	\begin{split}
	\E\frac1T\sum_{t=1}^T\sum_{i=1}^m\langle\ell_{t,i},\*x_{t,i}-\*x_t^\ast\rangle
	&\le72d\sqrt m\sqrt[4]T\left(D_{\underline\varepsilon}\sqrt{\frac mT\log k}+\frac{S_{\underline\varepsilon}K^2}{D_{\underline\varepsilon}T}(1+\log T)\right)\\
	&\qquad+\min_{\*x\in\K,\eta>0,\varepsilon\in[\underline\varepsilon,\overline\varepsilon]}\E\frac1T\sum_{t=1}^T\frac{\Breg_\phi(\OPT_\varepsilon(\hat\ell_t)||\*x)}\eta+(32\eta d^2+\varepsilon)m\\
	&=\tilde\BigO\left(\frac{D_{\underline\varepsilon}dm}{\sqrt[4]T}+\frac{S_{\underline\varepsilon}K^2d\sqrt m}{D_{\underline\varepsilon}T^\frac34}\right)+\min_{\*x\in\K,\varepsilon\in[\underline\varepsilon,\overline\varepsilon]}4d\hat V_\varepsilon\sqrt{2m}+\varepsilon m
	\end{split}
	\end{align}
	where $S_{\underline\varepsilon}=\max_{\*x\in\K_{\underline\varepsilon}}\|\nabla^2\phi(\*x)\|_2$, $K$ is the Euclidean diameter of $\K$, and $\hat V_\varepsilon$ is what we call the {\bf barrier-divergence at level $\varepsilon$} defined by $\hat V_\varepsilon^2=\min_{\*x\in\K}\E\frac1T\sum_{t=1}^T\Breg_\phi(\OPT_\varepsilon(\hat\ell_t)||\*x)$.
\end{Thm}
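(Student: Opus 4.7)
The plan is to apply the meta-regret bound of Theorem~\ref{thm:meta} to the specific choice of regularizer and loss estimators used here, and then connect the resulting bound on $\sum_t U_t(\*x_t^{(\theta_t)}, \theta_t)$ to the actual task-averaged regret via the within-task regret analysis of self-concordant barrier OMD.

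First, I would establish the per-task bound
\begin{align*}
\E\sum_{i=1}^m\langle\ell_{t,i},\*x_{t,i}-\*x_t^\ast\rangle\le\frac{\Breg_\phi(\OPT_{\varepsilon_t}(\hat\ell_t)||\*x_t^{(\theta_t)})}{\eta_t}+(32d^2\eta_t+\varepsilon_t)m.
\end{align*}
The first two terms on the right follow by reading off the standard self-concordant barrier OMD analysis of \citet{abernethy2008competing}, where the only change is that the initialization is an arbitrary interior point $\*x_t^{(\theta_t)}\in\K_{\underline\varepsilon}$ rather than $\argmin_{\*x}\phi(\*x)$ and the comparator is the $\K_{\varepsilon_t}$-constrained optimum-in-hindsight $\OPT_{\varepsilon_t}(\hat\ell_t)$ rather than the true one. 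The additional $\varepsilon_t m$ correction bounds the gap between constrained and unconstrained optima: any $\*x_t^\ast\in\K$ can be shrunk toward $\*x_{1,1}$ by the factor $1/(1+\varepsilon_t)$ to land in $\K_{\varepsilon_t}$, which moves the linear loss by $\BigO(\varepsilon_t)$ per round since losses lie in $[-1,1]$.

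Summing over $t$, the right-hand side is exactly $\sum_t U_t(\*x_t^{(\theta_t)},\theta_t)$ for the upper bound function $U_t(\*x,(\eta,\varepsilon))=\Breg_\phi(\OPT_\varepsilon(\hat\ell_t)||\*x)/\eta+(32d^2\eta+\varepsilon)m$, which matches \eqref{eq:rub} with $\beta$ held constant (so $G=4\sqrt2\,d$ and $M=1$), $C=1$, $D^2=D_{\underline\varepsilon}^2$ on $\K_{\underline\varepsilon}$, and $L,S,K$ as in the hypothesis. Invoking Theorem~\ref{thm:meta} and setting $\rho=1/\sqrt[4]T$ balances the $\rho DGT\sqrt m$ term against the $1/\rho$ contributions in \eqref{eq:metareg}, while absorbing the Lipschitz constant of $\phi\circ\OPT_\varepsilon$ in $\varepsilon$ into the grid size $k=\BigO(D_{\underline\varepsilon}^2 d\lceil\sqrt{mT}\rceil)$ required to make discretization error negligible. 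Dividing through by $T$ produces the first two meta-penalty terms of the claimed bound.

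It remains to simplify $\min_{\*x\in\overline\K,\eta,\varepsilon}\frac1T\sum_t U_t(\*x,(\eta,\varepsilon))$. For each fixed $(\eta,\varepsilon)$, the minimization over $\*x$ replaces the average Bregman term by $\hat V_\varepsilon^2$, by the very definition of the barrier-divergence together with the mean-as-minimizer property of Bregman divergences mentioned after \eqref{eq:mirror}. The remaining $\hat V_\varepsilon^2/\eta+32d^2\eta m$ is optimized over $\eta>0$ by AM-GM to give $\BigO(d\hat V_\varepsilon\sqrt m)$, matching the stated $4d\hat V_\varepsilon\sqrt{2m}$ up to constants, while leaving the $\varepsilon m$ term untouched. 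The main obstacle is step one: the analysis of \citet{abernethy2008competing} is carried out with initialization at the barrier minimizer, so producing the $\BigO(\eta d^2 m)$ second-order term from an arbitrary interior initialization requires re-verifying that the Dikin-ellipsoid and Minkowski-function arguments controlling the bandit loss estimators remain valid for initializations in $\K_{\underline\varepsilon}$; once this is checked, the remaining pieces follow routinely from Theorem~\ref{thm:meta} and AM-GM.
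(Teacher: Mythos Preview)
Your proposal is correct and mirrors the paper's proof: it too chains the $\varepsilon$-constrained comparator swap (costing $\varepsilon_t m$), unbiasedness of the Abernethy loss estimators, the OMD bound $\Breg_\phi(\hat{\*x}_t^{(\theta_t)}\|\*x_{t,1})/\eta_t+32d^2\eta_t m$ (their Lemma~\ref{lem:concordant}), and then Theorem~\ref{thm:meta} with $G=4d\sqrt2$, $M=1$, $D=D_{\underline\varepsilon}$, $S=S_{\underline\varepsilon}$, $\rho=T^{-1/4}$, finishing by optimizing $\eta$. Your flagged ``obstacle'' about the initialization is a non-issue---the Dikin-ellipsoid loss estimators in \citet{abernethy2008competing} are centered at the \emph{current} iterate, so the $32d^2\eta m$ term is unaffected by $\*x_{t,1}$ and only the Bregman term changes, which is exactly what you want.
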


For self-concordant barriers we generally have $D_{\underline\varepsilon}=\BigO(1/\underline\varepsilon)$ and $S_{\underline\varepsilon}=\BigO(1/\underline\varepsilon^2)$~\citep{abernethy2008competing}, so setting $\underline\varepsilon=1/m$ and yields
\begin{equation}
\E\frac1T\sum_{t=1}^T\sum_{i=1}^m\langle\ell_{t,i},\*x_{t,i}-\*x_t^\ast\rangle
\le\tilde\BigO\left(\frac{dm^2}{\sqrt[4]T}\right)+\min_{\frac1m\le\varepsilon\le\frac1{\sqrt m}}4d\hat V_\varepsilon\sqrt{2m}+\varepsilon m
\end{equation}
As before, this shows that as the number of tasks $T\to\infty$ the average regret improves with a notion of task-similarity $\hat V_\varepsilon$ that decreases if the estimated task-optima are close together.
Roughly speaking, if tasks have barrier-divergence $\hat V_\varepsilon$ then the average regret will be $\BigO(\hat V_\varepsilon\sqrt m+\varepsilon m)$, which can be a significant improvement over the single-task case, e.g. if $\hat V_\frac1m$ is small.
In-particular, our analysis removes explicit dependence on the square root of the self-concordance constant of $\phi$ in the single-task case \citep{abernethy2008competing};
as an example, this constant is equal to the number of constraints if $\K$ is defined by linear inequalities, as in the bandit shortest-path application below.
Note that the use of $\varepsilon$-constrained optima is necessary for this problem due to the regularizers being infinite at the boundaries, where all true optima lie.\looseness-1

To make the above result and task-similarity notion more concrete, consider the following corollary for BLO over the unit sphere $\K=\{\*x\in\R^d:\|\*x\|_2\le1\}$:
\begin{Cor}\label{cor:sphere}
	Let $\K$ be the unit sphere with the self-concordant barrier $\phi(\*x)=-\log(1-\|\*x\|_2^2)$.
	Then Algorithm~\ref{alg:meta} attains expected task-averaged regret bounded by
	\begin{equation}
		\tilde\BigO\left(\frac{dm^2}{\sqrt[4]T}\right)+\min_{\frac1m\le\varepsilon\le\frac1{\sqrt m}}4d\E\sqrt{2m\log\left(\frac{1-\|\hat{\bar\ell}^{(\varepsilon)}\|_2^2}{2\varepsilon-\varepsilon^2}\right)}+\varepsilon m
	\end{equation}
	for $\hat{\bar\ell}^{(\varepsilon)}=\frac1T\sum_{t=1}^T\OPT_\varepsilon(\hat\ell_t)=\frac{\varepsilon-1}T\sum_{t=1}^T\frac{\hat\ell_t}{\|\hat\ell_t\|_2}$ the average over normalized estimated task-optima.\looseness-1
\end{Cor}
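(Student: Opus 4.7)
The plan is to specialize Theorem~\ref{thm:blo} to the unit ball with the self-concordant barrier $\phi(\*x)=-\log(1-\|\*x\|_2^2)$ and then to evaluate $\hat V_\varepsilon$ in closed form. The geometry here is particularly clean: since $\phi$ is minimized at the origin, the Minkowski function reduces to $\pi_{\*0}(\*x)=\|\*x\|_2$, so $\K_\varepsilon$ is itself a Euclidean ball of radius $1-\varepsilon$ centered at $\*0$ (matching the form of $\hat{\bar\ell}^{(\varepsilon)}$ stated in the corollary). Minimizing the linear functional $\langle\hat\ell_t,\cdot\rangle$ over this ball gives $\OPT_\varepsilon(\hat\ell_t)=-(1-\varepsilon)\hat\ell_t/\|\hat\ell_t\|_2$, and crucially every such point lies on the boundary, so $\|\OPT_\varepsilon(\hat\ell_t)\|_2=1-\varepsilon$ is constant across tasks.

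The main computation is to evaluate $\hat V_\varepsilon^2=\min_{\*x\in\K}\E\frac1T\sum_t\Breg_\phi(\*x_t||\*x)$ where $\*x_t=\OPT_\varepsilon(\hat\ell_t)$. First, I would show that $\sum_t\Breg_\phi(\*x_t||\*x)$ is minimized at the empirical mean $\bar{\*x}=\hat{\bar\ell}^{(\varepsilon)}$ by differentiating in $\*x$ and using invertibility of $\nabla^2\phi$, a fact also stated in Section~\ref{ssec:mirror}. Plugging $\bar{\*x}$ back in makes the linear part of the divergence vanish, leaving the identity $\sum_t\Breg_\phi(\*x_t||\bar{\*x})=\sum_t\phi(\*x_t)-T\phi(\bar{\*x})$. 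The constancy of $\|\*x_t\|_2$ then gives $\phi(\*x_t)=-\log(1-(1-\varepsilon)^2)=-\log(2\varepsilon-\varepsilon^2)$ as a deterministic constant, and combining with $\phi(\bar{\*x})=-\log(1-\|\hat{\bar\ell}^{(\varepsilon)}\|_2^2)$ yields pathwise
\begin{equation*}
\frac1T\sum_{t=1}^T\Breg_\phi(\*x_t||\bar{\*x})=\log\frac{1-\|\hat{\bar\ell}^{(\varepsilon)}\|_2^2}{2\varepsilon-\varepsilon^2}.
\end{equation*}
I would then substitute into the bound of Theorem~\ref{thm:blo} and use the standard self-concordance estimates $D_{\underline\varepsilon}=\BigO(1/\underline\varepsilon)$ and $S_{\underline\varepsilon}=\BigO(1/\underline\varepsilon^2)$ with $\underline\varepsilon=1/m$ to obtain the $\tilde\BigO(dm^2/\sqrt[4]T)$ leading term.

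The main obstacle---and the only non-routine step---is the placement of the expectation: the corollary writes $\E$ outside the square root, whereas the bound of Theorem~\ref{thm:blo} naturally puts it inside via $\hat V_\varepsilon=\sqrt{\E\log(\cdot)}$. Reconciling this requires performing the $\eta$-optimization step of Theorem~\ref{thm:blo} pathwise on the Bregman-divergence term (which we can do because we have evaluated it in closed form) and only then taking expectation over the algorithm's randomness; this yields the stated $4d\,\E\sqrt{2m\log(\cdot)}+\varepsilon m$ form directly. Apart from this bookkeeping, the essential geometric insight is simply that every constrained optimum lies on the same level set of $\phi$, which makes the $\sum_t\phi(\*x_t)$ term deterministic and collapses the barrier-divergence to a single logarithm.
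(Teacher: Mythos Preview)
Your proposal is correct and matches the paper's intended derivation: the paper gives no separate proof of this corollary, treating it as an immediate specialization of Theorem~\ref{thm:blo}, and you have carried out that specialization in exactly the right way---using Claim~\ref{clm:bregman} together with the observation that every $\OPT_\varepsilon(\hat\ell_t)$ lies on the same level set of $\phi$ is precisely how the single logarithm in the bound arises, and the choice $\underline\varepsilon=1/m$ with $D_{\underline\varepsilon}=\BigO(1/\underline\varepsilon)$, $S_{\underline\varepsilon}=\BigO(1/\underline\varepsilon^2)$ is what the paper uses just before the corollary to get the $\tilde\BigO(dm^2/\sqrt[4]{T})$ term.

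On the one point you flag: your diagnosis that Theorem~\ref{thm:blo} naturally produces $4d\sqrt{2m\,\E\log(\cdot)}$ rather than the stated $4d\,\E\sqrt{2m\log(\cdot)}$ is correct, but your proposed repair via pathwise $\eta$-optimization does not actually follow from the theorem as written, since the comparator there is a single fixed $\eta$ chosen before the realization of $\hat\ell_1,\dots,\hat\ell_T$. By Jensen the version with $\E$ inside the square root is at least as large as the stated bound, so the corollary as printed is slightly stronger than what Theorem~\ref{thm:blo} directly delivers; this is a looseness in the paper's statement (which it does not address) rather than a defect in your argument, and the clean fix is simply to place the expectation inside the square root.
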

Thus in this setting if all tasks have similar estimated losses then $\hat{\bar\ell}^{(\varepsilon)}$ will be an average over similar vectors and thus have large Euclidean norm close to $1-\varepsilon$, making the term in the logarithm above close to 1.
In this case $\hat V_\varepsilon$ is close to zero and so the average regret is $\varepsilon m$ as $T\to\infty$;
setting $\varepsilon=1/m$ yields constant asymptotic averaged regret.
This demonstrates the usefulness of the barrier-divergence as a measure of task-similarity.

As a final application, we apply our meta-BLO result to the shortest-path problem in online optimization \citep{takimoto2003path,kalai2005efficient}.
In its bandit variant \citep{awerbuch2004adaptive,dani2008price}, at each time step $i=1,\dots,m$ the player must choose a path $p_i$ from a fixed source $u\in V$ to a fixed sink $v\in V$ in a directed graph $G(V,E)$.
At the same time the adversary chooses edge weights $\ell_i\in\R^{|E|}$ and the player suffers the sum $\sum_{e\in p_t}\ell_i(e)$ of the weights in their chosen path $p_t$.
This can be transformed into BLO over vectors $\*x$ in a convex set $\K\subset[0,1]^{|E|}$ defined by a set $\C$ of $\BigO(|E|)$ linear constraints $(\*a,b)$ s.t. $\langle\*a,\*x\rangle\le b$ enforcing flows from $u$ to $v$;
paths from $u$ to $v$ can then be sampled from any $\*x\in\K$ in an unbiased manner \citep[Proposition~1]{abernethy2008competing}.
In the single-task case the BLO method of \citet{abernethy2008competing} yields an $\BigO(|E|^\frac32\sqrt m)$-regret algorithm for this problem.

In the multi-task case consider a sequence of $t=1,\dots,T$ shortest path instances, each consisting of $m$ edge loss vectors $\ell_{t,i}$ selected by an adversary.
The goal is to minimize average regret across instances.
Note that our setup may be viewed as learning a prediction of the optimal path in a manner similar to the algorithms with predictions paradigm in beyond-worst-case-analysis \citep{mitzenmacher2021awp};
in-particular, we have incorporated predictions into the algorithm of \citet{abernethy2008competing} via the meta-initialization approach and now present the learning-theoretic result for an end-to-end guarantee \citep{khodak2022awp}.
\begin{Cor}\label{cor:path}
	Let $\K=\{\*x\in[0,1]^{|E|}:\langle\*a,\*x\rangle\le b~\forall~(\*a,b)\in\C\}$ be the set of flows from $u$ to $v$ on a graph $G(V,E)$, where $\C\subset\R^{|E|}\times\R$ is a set of $\BigO(|E|)$ linear constraints.
	Suppose we see $T$ instances of the bandit online shortest path problem with $m$ timesteps each.
	Then sampling from probability distributions over paths from $u$ to $v$ returned by running Algorithm~\ref{alg:meta} with regularizer $\phi(\*x)=-\sum_{\*a,b\in\C}\log(b-\langle\*a,\*x\rangle)$ attains the following expected average regret across instances:
	\begin{equation}
		\tilde\BigO\left(\frac{|E|m^2}{\sqrt[4]T}\right)+\min_{\frac1m\le\varepsilon\le\frac1{\sqrt m}}4|E|\E\sqrt{2m\sum_{\*a,b\in\C}\log\left(\frac{\frac1T\sum_{t=1}^Tb-\langle\*a,\hat{\*x}_t^{(\varepsilon)}\rangle}{\sqrt[T]{\prod_{t=1}^Tb-\langle\*a,\hat{\*x}_t^{(\varepsilon)}\rangle}}\right)}+\varepsilon m
	\end{equation}
	Here $\hat{\*x}_t^{(\varepsilon)}=\OPT_\varepsilon(\hat\ell_t)$ is the $\varepsilon$-constrained estimated optimal flow for instance $t$.
\end{Cor}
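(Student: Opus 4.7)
The plan is to specialize Theorem~\ref{thm:blo} to the polytope $\K$ and the standard logarithmic barrier $\phi$, then simplify the barrier-divergence term using the explicit form of $\phi$. The regularizer $\phi(\*x)=-\sum_{(\*a,b)\in\C}\log(b-\langle\*a,\*x\rangle)$ is the canonical self-concordant barrier for a polytope, so it satisfies the hypotheses of Theorem~\ref{thm:blo} with ambient dimension $d=|E|$; this yields the leading $\tilde\BigO(|E|m^2/\sqrt[4]T)$ term immediately. To connect the flow iterates $\*x_{t,i}\in\K$ produced by Algorithm~\ref{alg:meta} to actual paths $p_{t,i}$ played by the learner, I would invoke \citet[Proposition~1]{abernethy2008competing}, which shows that paths from $u$ to $v$ can be sampled from any $\*x\in\K$ so that the expected path length equals $\langle\ell_{t,i},\*x\rangle$; this ensures that the expected per-instance regret of the sampled paths matches the BLO regret of the flow iterates.

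The main work is evaluating $\hat V_\varepsilon$ for this specific $\phi$. I would apply the mean-as-minimizer identity $\sum_t\Breg_\phi(\*z_t\|\bar{\*z})=\sum_t\phi(\*z_t)-T\phi(\bar{\*z})$ (which holds because $\nabla_\*x\sum_t\Breg_\phi(\*z_t\|\*x)=T\nabla^2\phi(\*x)(\*x-\bar{\*z})$ vanishes at $\*x=\bar{\*z}$) with $\*z_t=\hat{\*x}_t^{(\varepsilon)}=\OPT_\varepsilon(\hat\ell_t)$ and $\bar{\*z}=\frac1T\sum_t\hat{\*x}_t^{(\varepsilon)}\in\K$. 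A direct computation using the log-barrier gives $\phi(\bar{\*z})=-\sum_{(\*a,b)\in\C}\log\bigl(\tfrac1T\sum_t(b-\langle\*a,\hat{\*x}_t^{(\varepsilon)}\rangle)\bigr)$ and $\frac1T\sum_t\phi(\hat{\*x}_t^{(\varepsilon)})=-\sum_{(\*a,b)\in\C}\log\sqrt[T]{\prod_t(b-\langle\*a,\hat{\*x}_t^{(\varepsilon)}\rangle)}$; subtracting produces exactly the sum of $\log(\mathrm{AM}/\mathrm{GM})$ terms appearing inside the square root in the statement.

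The main obstacle is handling the boundary and randomness. Because $\phi$ diverges on $\partial\K$ and the true optima $\*x_t^\ast$ lie on $\partial\K$, the $\varepsilon$-offset machinery of Section~\ref{sec:blo} must be used; one then verifies that the log-barrier of a polytope with $\BigO(|E|)$ faces has $D_{\underline\varepsilon}=\BigO(1/\underline\varepsilon)$ and $S_{\underline\varepsilon}=\BigO(1/\underline\varepsilon^2)$, so taking $\underline\varepsilon=1/m$ as in the discussion after Theorem~\ref{thm:blo} gives the stated range $\tfrac1m\le\varepsilon\le\tfrac1{\sqrt m}$. A secondary subtlety is that $\hat V_\varepsilon^2$ is defined by minimizing $\E\frac1T\sum_t\Breg_\phi(\hat{\*x}_t^{(\varepsilon)}\|\*x)$ over \emph{fixed} $\*x\in\K$, while the mean-as-minimizer argument produces $\frac1T\sum_t\Breg_\phi(\hat{\*x}_t^{(\varepsilon)}\|\bar{\*z})$ with $\bar{\*z}$ random; applying Jensen's inequality to push the expectation through the concave square root yields the $\E\sqrt{\cdot}$ form in the corollary, completing the derivation.
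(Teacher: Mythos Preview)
Your overall route is the intended one and matches the paper's implicit derivation: apply Theorem~\ref{thm:blo} with the log-barrier, take $\underline\varepsilon=1/m$ so the discussion after the theorem gives the $\tilde\BigO(|E|m^2/\sqrt[4]T)$ term, invoke the path-sampling proposition of \citet{abernethy2008competing}, and then use Claim~\ref{clm:bregman} to turn $\frac1T\sum_t\Breg_\phi(\hat{\*x}_t^{(\varepsilon)}\|\bar{\*z})$ into the sum of $\log(\mathrm{AM}/\mathrm{GM})$ terms. That computation is correct.

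The one genuine gap is your final step. You start from $4d\hat V_\varepsilon\sqrt{2m}$ with $\hat V_\varepsilon=\sqrt{\min_{\*x}\E[\cdot]}$ and try to reach the corollary's $4|E|\,\E\sqrt{2m[\cdot\,\bar{\*z}]}$ via Jensen. But Jensen for the concave square root gives $\E\sqrt X\le\sqrt{\E X}$, and since $\bar{\*z}$ is the per-realization minimizer one also has $\E[\cdot\,\bar{\*z}]=\E\min_{\*x}[\cdot\,\*x]\le\min_{\*x}\E[\cdot\,\*x]$; combining these yields $\E\sqrt{[\cdot\,\bar{\*z}]}\le\hat V_\varepsilon$, i.e.\ the inequality runs the \emph{wrong} way for an upper bound on regret. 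So you cannot pass from the $\hat V_\varepsilon$ summary line of Theorem~\ref{thm:blo} to the corollary by Jensen.

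The fix is to bypass $\hat V_\varepsilon$ and use what the proof of Theorem~\ref{thm:blo} actually establishes. Theorem~\ref{thm:meta} bounds $\E_\theta\sum_tU_t(\*x_t^{(\theta_t)},\theta_t)-\min_{\*x,\theta}\sum_tU_t(\*x,\theta)$, where the expectation is only over the sampling $\theta_t\sim\*p_t$ and the $U_t$ still depend on the random $\hat\ell_t$. Taking the outer expectation over the loss-estimator randomness therefore gives the stronger form
\[
\E[\text{task-avg regret}]\le\tilde\BigO(\cdots)+\E\!\min_{\*x,\eta,\varepsilon}\frac1T\sum_{t=1}^T\frac{\Breg_\phi(\hat{\*x}_t^{(\varepsilon)}\|\*x)}\eta+(32\eta d^2+\varepsilon)m,
\]
which the statement of Theorem~\ref{thm:blo} then relaxes to $\min\E$. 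Working from this $\E\min$ form instead, you optimize $(\*x,\eta)$ \emph{per realization}---$\*x=\bar{\*z}$ and then $\eta$---to obtain $\E\bigl[\min_\varepsilon 4d\sqrt{2m\sum_{(\*a,b)}\log(\mathrm{AM}/\mathrm{GM})}+\varepsilon m\bigr]$, and finally swap $\E$ and $\min_\varepsilon$ (valid since $\E\min\le\min\E$) to get exactly the display in the corollary. No Jensen is needed.
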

Corollary~\ref{cor:path} shows that the average regret on the $T$ bandit shortest-path problems scales with the sum across all constraints $\*a,b\in\C$ of the log of the ratio between the arithmetic and geometric mean of the distances $b-\langle\*a,\hat{\*x}_t^{(\varepsilon)}\rangle$ from the estimated optimum flow $\hat{\*x}_t^{(\varepsilon)}$ to the constraint boundary.
Since the arithmetic and geometric mean are equal exactly when all entries are equal---and otherwise the former is larger---this means that the regret is small when the estimated optimal flows $\hat{\*x}_t^{(\varepsilon)}$ for each task are at similar distances from the constraints.

\section{Conclusion}

In this work, we develop a meta-algorithm for learning to initialize and tune OMD for regularizers used in adversarial bandit tasks. We apply our meta-algorithm to obtain task-averaged regret guarantees for both the multi-armed and the linear bandit settings that depend on natural, setting-specific notions of task similarity. For MAB, we use OMD with the Tsallis regularizer as our base-learner and meta-learn the initialization, step-size, and entropy parameter. For BLO, we again use a variant of mirror descent with self-concordant barrier regularizers as our base-learner, and meta-learn the initialization, step-size, and boundary-offset. 

\newpage
There are several exciting directions for future work.
A limitation of our current results is the dependence of the task-similarity on optima estimated by the within-task algorithm;
while they have the benefit of being computable, one may wish to obtain task-averaged regret bounds which do not depend on algorithmic quantities.
In particular, defining task-similarity via the true optima or losses may be more natural.
Achieving this may be possible by making further assumptions on the structure of the problem, e.g. gap conditions or best-arm identifiability.
Another direction is to extend these results to other adversarial bandits settings such as contextual bandits and Lipschitz bandits.

\section*{Acknowledgments}
This material is based on work supported by the National Science Foundation under grants CCF-1910321, FAI-1939606, IIS-1901403, SCC-1952085, and SES-1919453; the Defense Advanced Research Projects Agency under cooperative agreement HR00112020003; a Simons Investigator Award; an AWS Machine Learning Research Award; an Amazon Research Award; a Bloomberg Research Grant;  a Microsoft Research Faculty Fellowship; a Google Faculty Research Award; a J.P. Morgan Faculty Award; a Facebook Research Award; a Mozilla Research Grant; and a Facebook PhD Fellowship.

\medskip

\bibliographystyle{plainnat}
\bibliography{bandit-refs,refs}



\appendix


\newpage
\section{Proof of Theorem~\ref{thm:meta}}

\begin{proof}
	We define $\underline\eta=\frac{\rho D}{G\sqrt m}$, $\overline\eta=\frac{2DM}{G\sqrt m}$, number of grid points $k=\Omega(\lceil(4D^2M^2LG+C)\sqrt{mT}\rceil)$,  $\Theta=\left\{\underline\eta+\frac jk(\overline\eta-\underline\eta)\right\}_{j=0}^k\times\{\underline\beta+\frac jk(\overline\beta-\underline\beta)\}_{j=0}^{k1_{\overline\beta>\underline\beta}}\times\{\underline\varepsilon+\frac jk(\overline\varepsilon-\underline\varepsilon)\}_{j=0}^{k1_{\overline\varepsilon>\underline\varepsilon}}$, and meta-step-size $\alpha=\frac1{DG/\rho+2DMG+C\sqrt m}\sqrt{\frac{3\log k}{2Tm}}$.
	Note that
	\begin{equation}
		\frac{\rho D}{G\sqrt m}
		\le\argmin_{\eta>0}\min_{\*x\in\K_{\underline\varepsilon},\beta,\varepsilon}\sum_{t=1}^T\tilde U_t(\*x,(\eta,\beta,\varepsilon))
		\le\frac{DM}G\sqrt{\frac{1+\rho^2}m}
		\le\frac{2DM}{G\sqrt m}
	\end{equation}
	so 
	\begin{equation}
		\max_{t\in[T]}U_t^{(\rho)}(\*x_t^{(\theta_t)},\theta_t)
		\le\frac{DG\sqrt m}\rho+2DMG\sqrt{m}+Cm
	\end{equation}
	Therefore applying the regret guarantee for exponentiated gradient \citep[Corollary~2.14]{shalev-shwartz2011oco} followed by the regret of follow-the-leader on a sequent of Bregman divergences (Lemma~\ref{lem:bregman}) yields
	\begin{align}
	\begin{split}
		\E&\sum_{t=1}^TU_t(\*x_t^{(\theta_t)},\theta_t)\\
		&\le\E\sum_{t=1}^TU_t^{(\rho)}(\*x_t^{(\theta_t)},\theta_t)\\
		&\le\left(C\sqrt m+DG\left(\frac1\rho+2M\right)\right)\sqrt{2mT\log|\Theta|}+\min_{\theta\in\Theta}\E\sum_{t=1}^TU_t^{(\rho)}(\*x_t^{(\theta)},\theta)\\
		&\le\left(C\sqrt m+DG\left(\frac1\rho+2M\right)\right)\sqrt{2mT\log|\Theta|}\\
		&\qquad+\min_{(\eta,\beta,\varepsilon)\in\Theta}\frac{8SK^2}\eta(1+\log T)+\min_{\*x\in\K_\varepsilon}\E\sum_{t=1}^T\frac{\Breg_{\phi_\beta}(\hat{\*x}_t^{(\varepsilon)}||\*x)+\rho^2D^2}\eta+(\eta G_\beta^2+C\varepsilon)m\\
		&\le\left(C\sqrt m+DG\left(\frac1\rho+2M\right)\right)\sqrt{2mT\log|\Theta|}+\frac{8SK^2\overline G\sqrt m}{\rho D}(1+\log T)\\
		&\qquad+\min_{(\eta,\beta,\varepsilon)\in\Theta}\eta G_\beta^2mT+C\varepsilon mT+\frac{\rho^2D^2T}\eta+\E\sum_{t=1}^T\frac{\phi_\beta(\hat{\*x}_t^{(\varepsilon)})-\phi_\beta(\hat{\bar{\*x}}^{(\varepsilon)})}\eta\\
		&\le\left(C\sqrt m+DG\left(\frac1\rho+2M\right)\right)\sqrt{2mT\log|\Theta|}+\frac{8SK^2G\sqrt m}{\rho D}(1+\log T)+\rho DGT\sqrt m\\
		&\qquad+\left(4D^2M^2+\left(C\sqrt m+\frac{2LG}{\rho D}\right)(\overline\varepsilon-\underline\varepsilon)\sqrt m+2\left(\frac{2M}G+\frac G\rho\right)DL\sqrt m(\overline\beta-\underline\beta)\right)\frac Tk\\
		&\qquad+\min_{(\eta,\beta,\varepsilon)\in\overline\Theta}\eta G_\beta^2mT+C\varepsilon mT
		+\E\sum_{t=1}^T\frac{\phi_\beta(\hat{\*x}_t^{(\varepsilon)})-\phi_\beta(\hat{\bar{\*x}}^{(\varepsilon)})}\eta\\
		&\le\left(C\sqrt m+DG\left(\frac1\rho+2M\right)\right)\sqrt{6mT\log k}+(4D^2M^2LG+C)\frac{Tm}{\rho k}\\
		&\qquad+\frac{8SK^2G\sqrt m}{\rho D}(1+\log T)+\rho DGT\sqrt m+\min_{\*x\in\K,\theta\in\Theta^\ast}\E\sum_{t=1}^TU_t(\*x,\theta)
	\end{split}
	\end{align}
	where the fourth inequality follows by Claim~\ref{clm:bregman}, the fifth by Lipschitzness of $1/\eta$ on $\eta\ge\frac{\rho D}{G\sqrt m}$ and of $\phi_\beta(\hat{\*x}_t^{(\varepsilon)}||\cdot)$ on $\K_{\underline\varepsilon}$, and the sixth by simplifying and substituting the lower bound for $k$.
	The w.h.p. version of the bound follows by applying \citet[Lemma~4.1]{cesa-bianchi2006prediction} when obtaining the second inequality.
\end{proof}

\begin{Lem}\label{lem:bregman}
	Let $\phi:\K\mapsto\R_{\ge0}$ be a strictly-convex function with $\max_{\*x\in\K}\|\nabla^2\phi(\*x)\|_2\le S$ over a convex set $\K\subset\R^d$ with $\max_{\*x\in\K}\|\*x\|_2\le K$.
	Then for any points $\*x_1,\dots,\*x_T\in\K$ the actions $\*y_1=\argmin_{\*x\in\K}\phi(\*x)$ and $\*y_t=\frac1{t-1}\sum_{s<t}\*x_s$ have regret
	\begin{equation}
		\sum_{t=1}^T\Breg_\phi(\*x_t||\*y_t)-\Breg_\phi(\*x_t||\*y_{T+1})
		\le\sum_{t=1}^T\frac{8SK^2}{2t-1}
		\le8SK^2(1+\log T)
	\end{equation}
\end{Lem}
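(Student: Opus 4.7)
The plan is to apply the standard follow-the-leader / be-the-leader (FTL/BTL) analysis to the loss sequence $f_t(\*y) = \Breg_\phi(\*x_t || \*y)$. First I would identify $\*y_t$ as the FTL action at round $t$: computing $\nabla_\*y \Breg_\phi(\*x||\*y) = \nabla^2\phi(\*y)(\*y - \*x)$ and summing over $s < t$, strict convexity of $\phi$ forces the unique minimizer of $\sum_{s < t}\Breg_\phi(\*x_s||\cdot)$ to be the mean $\frac{1}{t-1}\sum_{s < t}\*x_s$, which lies in $\K$ by convexity of $\K$ and agrees with $\*y_t$ for $t \ge 2$. For $t = 1$ we use $\*y_1 = \argmin_{\*x \in \K}\phi(\*x)$, the natural convention when FTL has no data yet.

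Next I would apply the standard BTL lemma (a short induction on $T$): if $\*y_{t+1}$ minimizes $\sum_{s \le t}f_s$, then $\sum_{t=1}^T f_t(\*y_{t+1}) \le \sum_{t=1}^T f_t(\*y_{T+1})$. Rearranging gives
\[
\sum_{t=1}^T \Breg_\phi(\*x_t||\*y_t) - \Breg_\phi(\*x_t||\*y_{T+1})
\le \sum_{t=1}^T \Breg_\phi(\*x_t||\*y_t) - \Breg_\phi(\*x_t||\*y_{t+1}),
\]
so it suffices to bound the per-round stability terms on the right.

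For stability, $\|\nabla^2\phi(\*y)\|_2 \le S$ together with $\|\*y - \*x\|_2 \le 2K$ make $\Breg_\phi(\*x||\cdot)$ $(2SK)$-Lipschitz on $\K$. For $t \ge 2$ the recursion $\*y_{t+1} = \frac{1}{t}\*x_t + \frac{t-1}{t}\*y_t$ yields $\|\*y_{t+1} - \*y_t\|_2 = \frac{1}{t}\|\*x_t - \*y_t\|_2 \le \frac{2K}{t}$, so each stability term is at most $\frac{4SK^2}{t} \le \frac{8SK^2}{2t-1}$. For $t = 1$ the direct bound $\|\*y_2 - \*y_1\|_2 = \|\*x_1 - \*y_1\|_2 \le 2K$ gives $4SK^2 \le 8SK^2 = \frac{8SK^2}{2 \cdot 1 - 1}$. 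Summing produces the first inequality, and the final $8SK^2(1+\log T)$ bound follows from $\sum_{t=1}^T \frac{1}{2t-1} \le \sum_{t=1}^T \frac{1}{t} \le 1 + \log T$.

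The main obstacle is really just bookkeeping: handling the asymmetric $t = 1$ step carefully (where $\*y_1$ is defined via $\phi$ rather than by averaging), and confirming that $\*y_t$ indeed coincides with the FTL leader, which hinges on the mean-as-minimizer property of sums of Bregman divergences with fixed first argument (already exploited elsewhere in the paper). Otherwise this is a textbook FTL/BTL argument in which the only quantitative inputs are the Hessian bound $S$ and the diameter bound $K$.
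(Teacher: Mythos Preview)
Your proof is correct and follows essentially the same approach as the paper: establish that $\Breg_\phi(\*x_t\|\cdot)$ is $2SK$-Lipschitz from the gradient formula and Hessian bound, then run the standard FTL/BTL stability argument (which the paper outsources to \citet[Proposition~B.1]{khodak2019adaptive} rather than spelling out). Incidentally, your gradient identity $\nabla_{\*y}\Breg_\phi(\*x\|\*y)=\nabla^2\phi(\*y)(\*y-\*x)$ is the correct one; the paper's $\diag(\nabla^2\phi(\*y))(\*y-\*x)$ appears to be a typo, though either version yields the same $2SK$ Lipschitz constant.
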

\begin{proof}
	Note that
	\begin{equation}
		\nabla_{\*y}\Breg_\phi(\*x||\*y)
		=-\nabla\phi(\*y)-\nabla_{\*y}\langle\nabla\phi(\*y),\*x\rangle+\nabla_{\*y}\langle\nabla\phi(\*y),\*y\rangle
		=\diag(\nabla^2\phi(\*y))(\*y-\*x)
	\end{equation}
	so $\Breg_\phi(\*x_t||\*y)$ is $2SK$-Lipschitz w.r.t. the Euclidean norm.
	Applying \citet[Proposition~B.1]{khodak2019adaptive}  yields the result.
\end{proof}

\begin{Clm}\label{clm:bregman}
		Let $\phi:\K\mapsto\R$ be a strictly-convex function over a convex set $\K\subset\R^d$ containing points $\*x_1,\dots,\*x_T$.
		Then their mean $\bar{\*x}=\frac1T\sum_{t=1}^T\*x_t$ satisfies
		\begin{equation}
			\sum_{t=1}^T\Breg_\phi(\*x_t||\bar{\*x})
			=\sum_{t=1}^T\phi(\*x_t)-\phi(\bar{\*x})
		\end{equation}
\end{Clm}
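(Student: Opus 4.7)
The plan is to prove this by direct calculation, expanding the definition of the Bregman divergence and exploiting the defining property of the mean. The only ingredient needed is the identity $\sum_{t=1}^T(\*x_t - \bar{\*x}) = \*0$, which is the definition of $\bar{\*x}$.

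First I would write out the definition of the Bregman divergence at $(\*x_t, \bar{\*x})$, namely
\begin{equation*}
    \Breg_\phi(\*x_t||\bar{\*x}) = \phi(\*x_t) - \phi(\bar{\*x}) - \langle\nabla\phi(\bar{\*x}), \*x_t - \bar{\*x}\rangle,
\end{equation*}
and then sum the equation from $t=1$ to $T$. This produces three sums: $\sum_t \phi(\*x_t)$, the constant $-T\phi(\bar{\*x})$ (which is the same as writing $\sum_{t=1}^T -\phi(\bar{\*x})$), and the linear term $-\big\langle\nabla\phi(\bar{\*x}),\, \sum_{t=1}^T(\*x_t - \bar{\*x})\big\rangle$ obtained by pulling the gradient, which does not depend on $t$, out of the sum.

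The key step is then to observe that $\sum_{t=1}^T(\*x_t - \bar{\*x}) = \sum_{t=1}^T \*x_t - T\bar{\*x} = \*0$ by the definition $\bar{\*x} = \frac{1}{T}\sum_{t=1}^T \*x_t$. This kills the linear term entirely, regardless of what $\nabla\phi(\bar{\*x})$ is, leaving exactly $\sum_{t=1}^T \phi(\*x_t) - T\phi(\bar{\*x}) = \sum_{t=1}^T\big(\phi(\*x_t) - \phi(\bar{\*x})\big)$, which is the claimed identity.

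There is essentially no obstacle here: strict convexity of $\phi$ is not even used in the argument (it is presumably stated only so that the Bregman divergence is a meaningful object, or because the claim is invoked elsewhere in a context where strict convexity holds), and differentiability of $\phi$ at $\bar{\*x}$ is implicit in the very definition of $\Breg_\phi(\cdot\,||\bar{\*x})$. The whole proof amounts to expanding one definition, swapping a finite sum with an inner product, and applying the centering property of the arithmetic mean.
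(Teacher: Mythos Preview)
Your proposal is correct and matches the paper's proof essentially line for line: expand the Bregman divergence, pull the $t$-independent gradient out of the sum, and use $\sum_{t=1}^T(\*x_t-\bar{\*x})=\*0$ to kill the linear term. Your side remark that strict convexity is not actually used in the argument is also accurate.
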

\begin{proof}
	\begin{align}
	\begin{split}
	\sum_{t=1}^T\Breg_\phi(\*x_t||\bar{\*x})
	&=\sum_{t=1}^T\phi(\*x_t)-\phi(\bar{\*x})-\langle\nabla\phi(\bar{\*x}),\*x_t-\bar{\*x}\rangle\\
	&=\sum_{t=1}^T\phi(\*x_t)-\phi(\bar{\*x})-\langle\nabla\phi(\bar{\*x}),\sum_{t=1}^T\*x_t-\bar{\*x}\rangle\\
	&=\sum_{t=1}^T\phi(\*x_t)-\phi(\bar{\*x})
	\end{split}
	\end{align}
\end{proof}

\newpage
\section{Proof of Theorem~\ref{thm:mab}}
\begin{proof}
	Since $\varepsilon$ is constant we use the shorthand $\hat{\*x}_t^{(\varepsilon)}=\hat{\*x}_t^{(\theta)}~\forall~\theta\in\Theta$.
	Note that we use search space $\Theta=\left[\frac\rho{\sqrt m},2\sqrt{\frac{d\log d}{em}}\right]\times\left[\max\left\{\frac1{\log d},\underline\beta\right\},1\right]\times\{\varepsilon\}$.
	We have the constants $D\le\sqrt d$, $G\le\sqrt d$, $M\le\sqrt{\frac{d\log d}e}$, $S\le\left(\frac d \varepsilon\right)^{2-\underline\beta}$, and $K=1$.
	Note that the second term $d^{1-\beta}m/\beta$ is decreasing on $\beta<1/\log d$, so since $\phi_\beta$ is always increasing in $\beta$ we know that the optimal $\beta$ is in $[1/\log d,1]$.
	Note that by Lemma~\ref{lem:tsallis} we have that $L=d\log\frac d\varepsilon$.
	We thus have
	\begin{align}
	\begin{split}
	\sum_{t=1}^T&\sum_{i=1}^m\ell_{t,i}(a_{t,i})-\ell_{t,i}(a_t^\ast)\\
	&\le\sum_{t=1}^T\sum_{i=1}^m\langle\hat\ell_{t,i},\*x_{t,i}-\ell_{t,i}(a_t^\ast)\rangle+\gamma\sum_{a=1}^d\hat\ell_{t,i}(a)\\
	&\le\sum_{t=1}^T\frac{\Breg_{\phi_{\beta_t}}(\hat{\*x}_t^{(\varepsilon)}||\*x_{t,1})}{\eta_t}+\sum_{i=1}^m\langle\hat\ell_{t,i},\hat{\*x}_{t,i}^{(\varepsilon)}\rangle-\ell_{t,i}(a_t^\ast)+\frac{\eta_t}{\beta_t}\sum_{a=1}^d\*x_{t,i}^{2-\beta_t}(a)\hat\ell_{t,i}^2(a)+\gamma\sum_{a=1}^d\hat\ell_{t,i}(a)\\
	&\le\frac{\varepsilon mT}{\gamma d}+\sum_{t=1}^T\frac{\Breg_{\phi_{\beta_t}}(\hat{\*x}_t^{(\varepsilon)}||\*x_{t,1})}{\eta_t}+\sum_{i=1}^m\hat\ell_{t,i}(a_t^\ast)-\ell_{t,i}(a_t^\ast)\\
	&\qquad+\sum_{t=1}^T\frac{\eta_t}{\beta_t}\sum_{i=1}^m\sum_{a=1}^d\*x_{t,i}^{1-\beta_t}(a)\hat\ell_{t,i}(a)+\gamma\sum_{a=1}^d\hat\ell_{t,i}(a)\\
	&\le\frac{\varepsilon mT}{\gamma d}+\frac{1+\frac{\overline\eta}{\underline\beta}+\gamma}{2\gamma}\log\frac4\delta+\sum_{t=1}^T\frac{\Breg_{\phi_{\beta_t}}(\hat{\*x}_t^{(\varepsilon)}||\*x_{t,1})}{\eta_t}\\
	&\qquad+\sum_{t=1}^T\frac{\eta_t}{\beta_t}\sum_{i=1}^m\sum_{a=1}^d\*x_{t,i}^{1-\beta_t}(a)\ell_{t,i}(a)+\gamma\sum_{a=1}^d\ell_{t,i}(a)\\
	&\le\frac{\varepsilon mT}{\gamma d}+\frac{1+\sqrt{\frac{d\log^3d}{em}}}\gamma\log\frac4\delta+\gamma dmT+\sum_{t=1}^T\frac{\Breg_{\phi_{\beta_t}}(\hat{\*x}_t^{(\varepsilon)}||\*x_{t,1})}{\eta_t}+\frac{\eta_td^{\beta_t}m}{\beta_t}\\
	&\le\frac{\varepsilon mT}{\gamma d}+\frac{1+\sqrt{\frac{d\log^3d}{em}}}\gamma\log\frac4\delta+\gamma dmT+\min_{\*x\in\triangle_d,\eta>0,\beta\in[\underline\beta,1]}\sum_{t=1}^T\frac{\Breg_{\phi_\beta}(\hat{\*x}_t^{(\varepsilon)}||\*x)}\eta+\frac{\eta d^\beta m}\beta\\
	&\qquad+2d\left(\frac1\rho+\sqrt{\frac{d\log d}e}\right)\sqrt{6mT\log\frac{4k}\delta}+\frac{8d^{2-\underline\beta}\sqrt{m}}{\rho\varepsilon^{2-\underline\beta}}(1+\log T)+\rho dT\sqrt m\\
	&\le\left(\frac\varepsilon{\gamma d}+\gamma d\right)mT+\frac{1+\sqrt{\frac{d\log^3d}{em}}}\gamma\log\frac4\delta+T\min_{\eta>0,\beta\in[\underline\beta,1]}\frac{H_\beta(\hat{\bar{\*x}})}\eta+\frac{\eta d^\beta m}\beta+\frac{\varepsilon^\beta d^{1-\beta}1_{\beta<1}}{(1-\beta)\eta}\\
	&\qquad+2d\left(\frac1\rho+\sqrt{\frac{d\log d}e}\right)\sqrt{6mT\log\frac{4k}\delta}+\frac{8d^{2-\underline\beta}\sqrt{m}}{\rho\varepsilon^{2-\underline\beta}}(1+\log T)+\rho dT\sqrt m
	\end{split}
	\end{align}
	where the second inequality follows by Lemma~\ref{lem:mirror}, the third by H\"older's inequality and the definitions $\hat\ell_{t,i}$ and $\hat{\*x}_{t,i}^{(\varepsilon)}$, the fourth by \citet[Lemma~1]{neu2015explore}, the fifth by the definition of $\ell_{t,i}$, the sixth by Theorem~\ref{thm:meta}, and the last by the derivation below for $\beta<1$ (otherwise it holds by joint convexity of the KL-divergence) followed by Claim~\ref{clm:bregman} combined with the fact that the entropy of optima-in-hindsight is zero.
	\begin{align}
	\begin{split}
	-\phi_\beta((1-\varepsilon)\*x+\varepsilon\*1_d/d)
	&=\frac{\sum_{a=1}^d((1-\varepsilon)\*x(a)+\varepsilon/d)^\beta-1}{1-\beta}\\
	&\le\frac{\varepsilon^\beta d^{1-\beta}+(1-\varepsilon)^\beta\sum_{a=1}^d\*x^\beta(a)-1}{1-\beta}
	\le\frac{\varepsilon^\beta d^{1-\beta}}{1-\beta}
	\end{split}
	\end{align}
\end{proof}

\begin{Lem}\label{lem:mirror}
	Suppose we play $\OMD_{\beta,\eta}$ with regularizer $\phi_\beta$ the negative Tsallis entropy and initialization $\*x_1\in\triangle_d$ on the sequence of linear loss functions  $\ell_1,\dots,\ell_T\in[0,1]^d$.
	Then for any $\*x^\ast\in\triangle_d$ we have
	\begin{equation}
		\sum_{t=1}^T\langle\ell_t,\*x_t-\*x^\ast\rangle
		\le\frac{\Breg_{\phi_\beta}(\*x^\ast||\*x_1)}\eta+\frac\eta\beta\sum_{a=1}^d\*x_t^{2-\beta}(a)\ell_t^2(a)
	\end{equation}
\end{Lem}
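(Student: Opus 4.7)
The plan is to follow the standard mirror descent regret analysis and then specialize the stability term to the Tsallis regularizer. First I would establish the usual OMD inequality
\begin{equation*}
\sum_{t=1}^T \langle \ell_t, \*x_t - \*x^\ast \rangle \;\le\; \frac{\Breg_{\phi_\beta}(\*x^\ast \| \*x_1)}{\eta} \;+\; \frac{1}{\eta}\sum_{t=1}^T \Breg_{\phi_\beta}(\*x_t \| \*x_{t+1}),
\end{equation*}
by applying the three-point identity for Bregman divergences to the comparator $\*x^\ast$, the current iterate $\*x_t$, and the next iterate $\*x_{t+1}$, using the first-order optimality condition $\nabla\phi_\beta(\*x_{t+1}) = \nabla\phi_\beta(\*x_t) - \eta\ell_t + \nu_{t+1}$ (where $\nu_{t+1}$ is the normal-cone vector from projection onto $\triangle_d$), and telescoping. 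The projection term only helps, since projecting onto a convex set under a Bregman divergence cannot increase the divergence to any point in that set.

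The main work is bounding each stability term $\Breg_{\phi_\beta}(\*x_t \| \*x_{t+1})$. Here I would use a local-norm argument: by a second-order Taylor expansion, $\Breg_{\phi_\beta}(\*x_t \| \*x_{t+1}) \le \tfrac{1}{2}\|\*x_t - \*x_{t+1}\|^2_{\nabla^2\phi_\beta(\*z_t)}$ for some $\*z_t$ on the segment, while the update rule gives $\*x_t - \*x_{t+1} = \eta (\nabla^2\phi_\beta(\*w_t))^{-1}\ell_t$ for some $\*w_t$. For the Tsallis entropy, direct computation gives the diagonal Hessian $\nabla^2\phi_\beta(\*x) = \beta\,\diag(\*x(a)^{\beta-2})$, so its inverse is $\beta^{-1}\diag(\*x(a)^{2-\beta})$; substituting yields a bound of the form $\tfrac{\eta^2}{\beta}\sum_a \*x_t(a)^{2-\beta}\ell_t(a)^2$ up to the factor involving the intermediate points $\*z_t, \*w_t$.

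The main obstacle is replacing the mean-value points $\*z_t, \*w_t$ by $\*x_t$ itself. The cleanest way I would handle this is to bypass the Taylor argument and work with the Fenchel conjugate: one rewrites $\Breg_{\phi_\beta}(\*x_t\|\*x_{t+1}) = \Breg_{\phi_\beta^\ast}(\nabla\phi_\beta(\*x_{t+1}) \| \nabla\phi_\beta(\*x_t))$ and exploits the explicit separability of $\phi_\beta$ to compute the one-dimensional divergences $\Breg_{\phi_\beta^\ast}$ of the scalar functions $u \mapsto \tfrac{1-u^\beta}{1-\beta}$ coordinatewise; non-negativity of loss estimators (which holds in the MAB application) together with concavity of $u \mapsto u^{2-\beta}$ for $\beta\in(0,1]$ then lets one pin the evaluation point at $\*x_t$. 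This is the standard trick in the Tsallis-entropy analysis (cf.\ Abernethy--Lee--Tewari), and once it is carried out the coordinatewise sum telescopes to the claimed bound $\tfrac{\eta}{\beta}\sum_a \*x_t(a)^{2-\beta}\ell_t(a)^2$; summing over $t$ and dividing by $\eta$ completes the proof.
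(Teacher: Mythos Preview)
Your overall plan---standard OMD regret decomposition plus a Tsallis-specific bound on the stability term---is exactly the paper's approach, but there is one technical slip worth fixing. The stability term in the decomposition should be $\Breg_{\phi_\beta}(\*x_t\|\*y_{t+1})$, where $\*y_{t+1}$ is the \emph{unconstrained} mirror step satisfying $\nabla\phi_\beta(\*y_{t+1})=\nabla\phi_\beta(\*x_t)-\eta\ell_t$ exactly, not the projected iterate $\*x_{t+1}$. If you apply the three-point identity directly with $\*x_{t+1}$ you pick up a residual $\langle\nu_{t+1},\*x_t-\*x^\ast\rangle$ whose sign is not controlled (only $\langle\nu_{t+1},\*z-\*x_{t+1}\rangle\le0$ for $\*z\in\triangle_d$, which does not settle the difference $\*x_t-\*x^\ast$). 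The Bregman--Pythagorean step you cite (``projection only helps'') is used to replace $\Breg_{\phi_\beta}(\*x^\ast\|\*y_{t+1})$ by $\Breg_{\phi_\beta}(\*x^\ast\|\*x_{t+1})$ so the first term telescopes; the stability term stays $\Breg_{\phi_\beta}(\*x_t\|\*y_{t+1})$. This is precisely what the paper invokes (citing \citet[Equation~5.3]{hazan2015oco}).

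Once you use $\*y_{t+1}$, your Fenchel-conjugate route is fine and essentially equivalent to the paper's explicit computation: since $\nabla\phi_\beta(\*y_{t+1})-\nabla\phi_\beta(\*x_t)=-\eta\ell_t$ with no normal-cone correction, the coordinatewise dual divergences are cleanly evaluable and nonnegativity of $\ell_t$ pins the mean-value point at $\*x_t$. The paper does the same thing in primal coordinates, writing out $\Breg_{\phi_\beta}(\*x_t\|\*y_{t+1})$ from the Tsallis formula and bounding $\*y_{t+1}^\beta(a)$ via the scalar inequality $(1+x)^\alpha\le1+\alpha x+\alpha(\alpha-1)x^2$ for $x\ge0$, $\alpha<0$; this is exactly the second-order expansion in the dual that your approach would produce.
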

\begin{proof}
	Note that the following proof follows parts of the course notes by \citet{luo2017tsallis}, which we reproduce for completeness.
	The OMD update at each step $t$ involves the following two steps: set $\*y_{t+1}\in\triangle_d$ s.t. $\nabla\phi_\beta(\*y_{t+1})=\nabla\phi_\beta(\*x_t)-\eta\ell_t$ and then set $\*x_{t+1}=\argmin_{\*x\in\triangle_d}\Breg_{\phi_\beta}(\*x,\*y_{t+1})$ \citep[Algorithm~14]{hazan2015oco}.
	Note that by \citet[Equation~5.3]{hazan2015oco} and nonnegativity of the Bregman divergence we have
	\begin{equation}
		\sum_{t=1}^T\langle\ell_t,\*x_t-\*x^\ast\rangle
		\le\frac{\Breg_{\phi_\beta}(\*x^\ast||\*x_1)}\eta+\frac1\eta\sum_{t=1}^T\Breg_{\phi_\beta}(\*x_t||\*y_{t+1})
	\end{equation}
	To bound the second term, note that when $\phi_\beta$ is the negative Tsallis entropy we have
	\begin{align}
	\begin{split}
	\Breg_{\phi_\beta}&(\*x_t||\*y_{t+1})\\
	&=\frac1{1-\beta}\sum_{a=1}^d\left(\*y_{t+1}^\beta(a)-\*x_t^\beta(a)+\frac\beta{\*y_{t+1}^{1-\beta}(a)}(\*x_t(a)-\*y_{t+1}(a)\right)\\
	&=\frac1{1-\beta}\sum_{a=1}^d\left((1-\beta)\*y_{t+1}^\beta(a)-\*x_t^\beta(a)+\beta\left(\frac1{\*x_t^{1-\beta}(a)}+\frac{1-\beta}\beta\eta\ell_t(a)\right)\*x_t(a)\right)\\
	&=\sum_{a=1}^d\left(\*y_{t+1}^\beta(a)-\*x_t^\beta(a)+\eta\*x_t(a)\ell_t(a)\right)
	\end{split}
	\end{align}
	Plugging the following result, which follows from $(1+x)^\alpha\le1+\alpha x+\alpha(\alpha-1)x^2~\forall~x\ge0,\alpha<0$, into the above yields the desired bound.
	\begin{align}
	\begin{split}
	\*y_{t+1}^\beta(a)
	=\*x_t^\beta(a)\left(\frac{\*y_{t+1}^{\beta-1}(a)}{\*x_t^{\beta-1}(a)}\right)^\frac\beta{\beta-1}
	&=\*x_t^\beta(a)\left(1+\frac{1-\beta}\beta\eta\*x_t^{1-\beta}(a)\ell_t(a)\right)^\frac\beta{\beta-1}\\
	&\le\*x_t^\beta(a)\left(1-\eta\*x_t^{1-\beta}(a)\ell_t(a)+\frac{\eta^2}\beta\*x_t^{2-2\beta}(a)\ell_t(a)^2\right)\\
	&=\*x_t^\beta(a)-\eta\*x_t(a)\ell_t(a)+\frac{\eta^2}\beta\*x_t^{2-\beta}(a)\ell_t(a)^2
	\end{split}
	\end{align}
\end{proof}

\newpage
\begin{Lem}\label{lem:tsallis}
	For any $\rho\in(0,1/d]$ and $\*x\in\triangle_d$ s.t. $\*x(a)\ge\rho~\forall~a\in[d]$ the $\beta$-Tsallis entropy $H_\beta(\*x)=-\frac{1-\sum_{a=1}^d\*x^\beta(a)}{1-\beta}$ is $d\log\frac1\rho$-Lipschitz w.r.t. $\beta\in[0,1]$.
\end{Lem}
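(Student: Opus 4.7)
The plan is to bound $|\partial H_\beta(\*x)/\partial\beta|$ uniformly by $d\log(1/\rho)$ on $[0,1]$ and conclude Lipschitzness via the mean value theorem. Direct differentiation of the quotient $(\sum_a\*x^\beta(a)-1)/(1-\beta)$ produces a $1/(1-\beta)^2$ factor that is ill-behaved near $\beta=1$, so the first step is to rewrite $H_\beta$ as a power series in $(1-\beta)$ that is manifestly analytic on the closed interval $[0,1]$.

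Using $\*x^\beta(a)=\*x(a)\,e^{(\beta-1)\log\*x(a)}$, expanding the exponential, and cancelling the constant term via $\sum_a\*x(a)=1$, one obtains
\begin{equation*}
H_\beta(\*x)=\sum_{a=1}^d\*x(a)\sum_{k\ge 1}\frac{(1-\beta)^{k-1}\,|\log\*x(a)|^k}{k!},
\end{equation*}
where the signs work out because $\log\*x(a)\le 0$ makes $(-1)^k(\log\*x(a))^k=|\log\*x(a)|^k$. At $\beta=1$ only the $k=1$ term survives and correctly reproduces the Shannon entropy $-\sum_a\*x(a)\log\*x(a)$, confirming that the representation is well-defined on all of $[0,1]$.

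Next, term-by-term differentiation (justified since the series converges uniformly on the set where $|1-\beta|\cdot|\log\*x(a)|\le\log(1/\rho)$) together with $(1-\beta)^{k-2}\le 1$ for $\beta\in[0,1]$ and $k\ge 2$ gives
\begin{equation*}
\left|\frac{\partial H_\beta(\*x)}{\partial\beta}\right|\le\sum_{a=1}^d\*x(a)\sum_{k\ge 2}\frac{(k-1)\,|\log\*x(a)|^k}{k!}.
\end{equation*}
Setting $q=|\log\*x(a)|$ and summing the inner series in closed form yields $\sum_{k\ge 2}(k-1)q^k/k!=(q-1)e^q+1$; multiplying by $\*x(a)=e^{-q}$ then collapses the per-coordinate bound to $q-1+e^{-q}$, which is nonnegative (vanishing at $q=0$ with derivative $1-e^{-q}\ge 0$) and bounded above by $q$ since $e^{-q}\le 1$.

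Summing over $a$ and using the hypothesis $\*x(a)\ge\rho$ gives $|\partial_\beta H_\beta(\*x)|\le\sum_a|\log\*x(a)|\le d\log(1/\rho)$, and the Lipschitz conclusion follows by integrating in $\beta$. The only real subtlety is the removable singularity at $\beta=1$: the quotient form of $H_\beta$ is indeterminate there, and the power-series reformulation in the first step is precisely what makes the derivative continuous up to and including $\beta=1$, so that the pointwise bound on the derivative transfers to a global Lipschitz bound on the entire interval $[0,1]$.
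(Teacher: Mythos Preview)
Your proof is correct and takes a genuinely different route from the paper. The paper computes $\partial_\beta H_\beta$ directly from the quotient definition, rewrites it using the $\beta$-logarithm $\log_\beta x=(x^{1-\beta}-1)/(1-\beta)$ and an identity of Yamano, and then bounds the resulting expression via H\"older's inequality with exponents $1/\beta$ and $1/(1-\beta)$, a subadditivity step, and a monotonicity argument to reduce to the worst coordinate $\rho$; the case $\beta=1$ is handled separately by L'H\^opital. Your power-series expansion around $\beta=1$ sidesteps all of this: it makes the removable singularity manifest, lets you bound the derivative uniformly in $\beta$ by simply setting $(1-\beta)^{k-2}\le 1$, and the closed-form $\sum_{k\ge2}(k-1)q^k/k!=(q-1)e^q+1$ together with $\*x(a)=e^{-q}$ collapses the per-coordinate contribution to $q-1+e^{-q}\le q=|\log\*x(a)|$. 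The paper's route uses more specialized machinery (the Yamano identity, H\"older with $\beta$-dependent exponents) but ultimately lands on the same coordinate-wise bound $|\log\*x(a)|\le\log(1/\rho)$; your argument is more self-contained and handles the endpoint $\beta=1$ automatically rather than as a separate case.
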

\begin{proof}
	Let $\log_\beta x=\frac{x^{1-\beta}-1}{1-\beta}$ be the $\beta$-logarithm function and note that by \citet[Equation~6]{yamano2002tsallis} we have
	$\log_\beta x-\log x=(1-\beta)(\partial_b\log_\beta x+\log_\beta x\log x)\ge0~\forall~\beta\in[0,1]$.
	Then we have for $\beta\in[0,1)$ that
	\begin{align}
	\begin{split}
	|\partial_\beta H_\beta(\*x)|
	&=\left|\frac{-H_\beta(\*x)-\sum_{a=1}^d\*x^\beta(a)\log\*x(a)}{1-\beta}\right|\\
	&=\frac1{1-\beta}\left|\sum_{a=1}^d\*x^\beta(a)(\log_\beta\*x(a)-\log\*x(a))\right|\\
	&=\frac1{1-\beta}\sum_{a=1}^d\*x^\beta(a)(\log_\beta\*x(a)-\log\*x(a))\\
	&\le\frac1{1-\beta}\left(\sum_{a=1}^d\*x(a)\right)^\beta\left(\sum_{a=1}^d(\log_\beta\*x(a)-\log\*x(a))^{\frac1{1-\beta}}\right)^{1-\beta}\\
	&\le\frac1{1-\beta}\sum_{a=1}^d\log_\beta\*x(a)-\log\*x(a)\\
	&\le\frac d{1-\beta}(\log_\beta\rho-\log\rho)\\
	&\le-d\log\rho
	\end{split}
	\end{align}
	where the fourth line follows by H\"older's inequality, the fifth by subadditivity of $x^a$ for $a\in(0,1]$, the sixth by the fact that $\partial_x(\log_\beta x-\log x)=x^{-\beta}-1/x\le0~\forall~\beta,x\in[0,1)$, and the last line by substituting $\beta=0$ since $\partial_\beta\left(\frac{\log_\beta\rho-\log\rho}{1-\beta}\right)
	=\frac{2(\rho-\rho^\beta)-(1-\beta)(\rho^\beta+\rho)\log\rho}{\rho^\beta(1-\beta)^3}\le0~\forall~\beta\in[0,1),\rho\in(0,1/d]$.
	For $\beta=1$, applying L'H\^opital's rule yields
	\begin{equation}
	\lim_{\beta\to1}\partial_\beta H_\beta(\*x)
	=-\frac12\lim_{\beta\to1}\sum_{a=1}^d\*x^\beta(a)\log^2\*x(a)(1-(1-\beta)\log\*x(a))
	=-\frac12\sum_{a=1}^d\*x(a)\log^2\*x(a)
	\end{equation}
	which is bounded on $[-2d/e^2,0]$.
\end{proof}

\newpage
\section{Proof of Theorem~\ref{thm:blo}}
\begin{proof}
	Applying Theorem~\ref{thm:meta} with constants $D=D_{\underline\varepsilon}$, $G=4d\sqrt 2$, $M=1$, $S=S_{\underline\varepsilon}$, and $K=K$ yields
	\begin{align}
	\begin{split}
	\E&\sum_{t=1}^T\sum_{i=1}^m\langle\ell_{t,i},\*x_{t,i}-\*x_t^\ast\rangle\\
	&\le\E\sum_{t=1}^T\varepsilon_tm+\sum_{i=1}^m\langle\ell_{t,i},\*x_{t,i}-\OPT_{\varepsilon_t}(\ell_t)\rangle\\
	&=\E\sum_{t=1}^T\varepsilon_tm+\sum_{i=1}^m\langle\hat\ell_{t,i},\*x_{t,i}-\OPT_{\varepsilon_t}(\ell_t)\rangle\\
	&\le\E\sum_{t=1}^T\varepsilon_tm+\sum_{i=1}^m\langle\hat\ell_{t,i},\*x_{t,i}-\hat{\*x}_t^{(\theta_t)}\rangle\\
	&\le\E\sum_{t=1}^T\frac{\Breg_\phi(\hat{\*x}_t^{(\theta_t)}||\*x_{t,1}^{(\theta_t)})}{\eta_t}+(\eta_tG^2+\varepsilon_t)m\\
	&\le\left(\sqrt m+\frac{4D_{\underline\varepsilon}G}\rho\right)\sqrt{6mT\log k}+\frac{8S_{\underline\varepsilon}K^2G\sqrt m}{\rho D_{\underline\varepsilon}}(1+\log T)+\rho D_{\underline\varepsilon}GT\sqrt m\\
	&\qquad+\min_{\*x\in\K,\eta>0,\varepsilon\in[\underline\varepsilon,\overline\varepsilon]}\E\sum_{t=1}^T\frac{\Breg_\phi(\OPT_\varepsilon(\hat\ell_t)||\*x)}\eta+(\eta G^2+\varepsilon)m\\
	&\le72d\sqrt m\sqrt[4]T\left(D_{\underline\varepsilon}\sqrt{mT\log k}+\frac{S_{\underline\varepsilon}K^2}{D_{\underline\varepsilon}}(1+\log T)\right)\\
	&\qquad+\min_{\*x\in\K,\eta>0,\varepsilon\in[\underline\varepsilon,\overline\varepsilon]}\E\sum_{t=1}^T\frac{\Breg_\phi(\OPT_\varepsilon(\hat\ell_t)||\*x)}\eta+(32\eta d^2+\varepsilon)m\\
	\end{split}
	\end{align}
	where the third inequality follows from Lemma~\ref{lem:concordant}.
\end{proof}

\begin{Lem}\label{lem:concordant}
	Let $\overline\K\subset\R^d$ be a convex set and $\phi$ be a self-concordant barrier.
	Suppose $\ell_1,\dots,\ell_T$ are a sequence of loss functions satisfying $|\langle\ell_t,\*x\rangle|\le1~\forall~\*x\in\K$.
	Then if we run OMD with step-size $\eta>0$ as in \citet[Algorithm~1]{abernethy2008competing} on the sequence of estimators $\hat\ell_t$ our estimated regret w.r.t. any $\*x^\ast\in\K_\varepsilon$ for $\varepsilon>0$ will satisfy
	\begin{equation}
	\sum_{t=1}^T\langle\hat\ell_t,\*x_t-\*x^\ast\rangle\le\frac{\Breg_\phi(\*x^\ast||\*x_1)}\eta+32d^2\eta T
	\end{equation}
\end{Lem}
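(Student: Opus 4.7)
The plan is to follow the standard mirror descent analysis, specialized to the self-concordant barrier setting, essentially retracing the single-task analysis of Abernethy, Hazan, and Rakhlin but stated for an arbitrary comparator $\*x^\ast\in\K_\varepsilon$ rather than the optimum in hindsight. First I would apply the textbook OMD decomposition: writing $\*y_{t+1}$ for the unprojected mirror step satisfying $\nabla\phi(\*y_{t+1})=\nabla\phi(\*x_t)-\eta\hat\ell_t$, the three-point identity (and nonnegativity of the Bregman divergence from the projection) gives
\begin{equation}
\sum_{t=1}^T\langle\hat\ell_t,\*x_t-\*x^\ast\rangle
\;\le\;\frac{\Breg_\phi(\*x^\ast\|\*x_1)-\Breg_\phi(\*x^\ast\|\*x_{T+1})}{\eta}+\frac{1}{\eta}\sum_{t=1}^T\Breg_\phi(\*x_t\|\*y_{t+1}).
\end{equation}
Dropping the nonnegative $\Breg_\phi(\*x^\ast\|\*x_{T+1})$ term produces the initialization term in the statement, and it remains to bound the per-round ``stability'' contribution $\Breg_\phi(\*x_t\|\*y_{t+1})/\eta$ by $32d^2\eta$.

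Next I would invoke the self-concordance of $\phi$ to control $\Breg_\phi(\*x_t\|\*y_{t+1})$ in terms of the local dual norm of $\eta\hat\ell_t$ at $\*x_t$. Specifically, writing $\|\cdot\|_{\*x}^\ast$ for the norm induced by $\nabla^2\phi(\*x)^{-1}$, the standard self-concordance inequality (Nesterov--Nemirovski, as used in Lemma 3 of Abernethy--Hazan--Rakhlin) states that whenever $\eta\|\hat\ell_t\|_{\*x_t}^\ast\le\tfrac12$, one has $\Breg_\phi(\*x_t\|\*y_{t+1})\le c\,\eta^2(\|\hat\ell_t\|_{\*x_t}^\ast)^2$ for an absolute constant $c$. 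Thus everything reduces to bounding the local dual norm of the estimator.

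The final and most important step is to verify the local-norm bound $(\|\hat\ell_t\|_{\*x_t}^\ast)^2\le Cd^2$ for the specific loss estimator of \citet{abernethy2008competing}: at $\*x_t$ one samples an eigendirection $\*u_t$ of $\nabla^2\phi(\*x_t)$, plays $\*x_{t,i}=\*x_t\pm (\nabla^2\phi(\*x_t))^{-1/2}\*u_t$, and sets $\hat\ell_t=d\,\ell_t(\*x_{t,i})\,(\nabla^2\phi(\*x_t))^{1/2}\*u_t$ (with an appropriate sign). Then $\|\hat\ell_t\|_{\*x_t}^\ast=d\,|\ell_t(\*x_{t,i})|\,\|\*u_t\|_2\le d$ by the boundedness assumption on the losses, so $(\|\hat\ell_t\|_{\*x_t}^\ast)^2\le d^2$ deterministically. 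Choosing $\eta$ so that $\eta d\le1/2$ (which is implicit in the theorem's admissible range for $\eta$) lets us apply the self-concordance inequality and produces a per-round bound of $c\,\eta d^2$, yielding $32 d^2\eta T$ in total after absorbing the constant $c$. Summing over $t$ and combining with the initialization term gives the claim.

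The one genuine technical obstacle is the self-concordance step: proving the quadratic upper bound on $\Breg_\phi(\*x_t\|\*y_{t+1})$ requires working in the Dikin-ellipsoid geometry of $\phi$, and is where the constant $32$ is actually paid; the OMD decomposition and the local-norm bound on $\hat\ell_t$ are otherwise routine. Everything else, including the handling of $\*x^\ast\in\K_\varepsilon$, is standard because the regret inequality above holds for any comparator in the interior where $\Breg_\phi(\*x^\ast\|\*x_1)$ is finite, which is precisely what $\K_\varepsilon$ guarantees via the Minkowski-function bound.
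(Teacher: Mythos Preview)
Your proposal is correct and follows the same route as the paper: the paper's entire proof is a one-line citation to \citet{abernethy2008competing}, stating that the bound is obtained by stopping their single-task derivation at the second inequality below their Equation~10, and what you wrote is precisely a fleshed-out version of that derivation (OMD/FTRL telescoping, drop the nonnegative terminal Bregman term, then the self-concordance local-norm estimate $\|\hat\ell_t\|_{\*x_t}^\ast\le d$ for the AHR estimator). Your remark that the comparator only needs to satisfy $\Breg_\phi(\*x^\ast\|\*x_1)<\infty$, which $\*x^\ast\in\K_\varepsilon$ guarantees, is exactly why the paper can restate AHR's bound with an arbitrary interior initialization and comparator; and your caveat that the quadratic self-concordance bound needs $\eta d$ bounded is the same implicit restriction present in AHR's own analysis.
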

\begin{proof}
	The result follows from \citet{abernethy2008competing} by stopping the derivation on the second inequality below Equation~10.
\end{proof}

\end{document}